\newtheorem{definition}{Definition}
\newtheorem{lemma}{Lemma}
\newtheorem{corollary}{Corollary}
\newtheorem{theorem}{Theorem}
\newcommand{\thmref}[1]{Theorem~\ref{#1}}
\newcommand{\secref}[1]{Section~\ref{#1}}
\newcommand{\lemref}[1]{Lemma~\ref{#1}}
\newcommand{\corref}[1]{Corollary~\ref{#1}}
\newcommand{\appref}[1]{Appendix~\ref{#1}}
\newcommand{\Prob}[1]{\mathbb P\left[#1\right]}
\newcommand{\tinit}{\hat{\vtheta}_{\mr{init}}}
\newcommand{\tthres}{\hat{\vtheta}_{\mr{thres}}}
\newcommand{\mr}[1]{\mathrm{#1}}
\newcommand{\mc}[1]{\mathcal{#1}}
\newcommand{\ms}[1]{\mathsf{#1}}
\newcommand{\norm}[1]{\left \lVert #1 \right \rVert}
\newcommand{\ts}{\vtheta^*}
\newcommand{\that}{\hat{\vtheta}}
\def\secref#1{section~\ref{#1}}
\def\eqref#1{(\ref{#1})}
\def\1{\bm{1}}
\def\ry{{\textnormal{y}}}
\def\rvy{{\mathbf{y}}}
\def\vtheta{{\bm{\theta}}}
\def\va{{\bm{a}}}
\def\vx{{\bm{x}}}
\def\vy{{\bm{y}}}
\def\mA{{\bm{A}}}
\def\mD{{\bm{D}}}
\def\mM{{\bm{M}}}
\def\mU{{\bm{U}}}
\def\mV{{\bm{V}}}
\def\mX{{\bm{X}}}
\DeclareMathAlphabet{\mathsfit}{\encodingdefault}{\sfdefault}{m}{sl}
\SetMathAlphabet{\mathsfit}{bold}{\encodingdefault}{\sfdefault}{bx}{n}
\DeclareMathOperator*{\argmax}{arg\,max}
\DeclareMathOperator*{\argmin}{arg\,min}
\title{Fixed-Budget Best-Arm Identification in
\\Sparse Linear Bandits}
\author{\name Recep Can Yavas \email recep.yavas@cnrsatcreate.sg \\
      \addr 
      CNRS at CREATE, Singapore
      \AND
      \name Vincent Y. F. Tan \email vtan@nus.edu.sg \\
      \addr Department of Mathematics, \\  Department of Electrical and Computer Engineering, \\ National University of Singapore
      }
\begin{document}

\maketitle

\begin{abstract}
We study the best-arm identification problem in sparse linear bandits under the fixed-budget setting. In sparse linear bandits, the unknown feature vector $\ts$ may be of large dimension $d$, but only a few, say $s \ll d$ of these features have non-zero values. We design a two-phase algorithm, Lasso and Optimal-Design- (Lasso-OD) based linear best-arm identification. The first phase of Lasso-OD leverages the sparsity of the feature vector by applying the thresholded Lasso introduced by Zhou (2009), which  estimates the support of $\ts$ correctly  with high probability using rewards from the selected arms and a judicious choice of the design matrix. The second phase of Lasso-OD applies the OD-LinBAI algorithm by Yang and Tan (2022) on that estimated support.  We derive a non-asymptotic upper bound on the error probability of Lasso-OD by carefully choosing hyperparameters (such as Lasso's regularization parameter) and balancing the error probabilities of both phases. For fixed sparsity $s$ and budget $T$, the exponent in the error probability of Lasso-OD depends on $s$ but not on the dimension $d$,  yielding a significant performance improvement for sparse and high-dimensional linear bandits. Furthermore, we show that Lasso-OD is almost minimax optimal in the exponent. Finally, we provide  numerical examples to demonstrate the significant performance improvement over the existing algorithms for non-sparse linear bandits such as OD-LinBAI, BayesGap, Peace, LinearExploration, and GSE.
\end{abstract}

\section{Introduction}
The stochastic multi-armed bandit (MAB) is a model that provides a mathematical formulation to study the sequential design of experiments and exploration-exploitation trade-off, where a learner pulls an arm out of a total $K$ and receives a reward drawn from a fixed and unknown distribution according to the chosen arm. This model has several applications including online advertising, recommendation
systems, and drug tests. While in the standard reward model, the arms are uncorrelated with each other, stochastic linear bandits introduced in \citet{auer2002} generalize the standard model by associating each arm with a $d$-dimensional feature vector and the reward is equal to the inner product between the feature vector and an unknown global parameter. Therefore, the arms are correlated in linear bandits, meaning that pulling an arm gives information about the rewards of some other arms.

Most prior work including \citet{auer2002, thompson1933, robbins1952some, bubeck2012regret, dani2008stochastic} on MABs focuses on \emph{regret minimization}, where the goal is to maximize the cumulative reward after $T$ arm pulls by optimizing the trade-off between exploration and exploitation. Recently, the \emph{pure exploration} setting has drawn attention from researchers. One example of pure exploration is the \emph{best-arm identification} (BAI) problem, where the goal is to identify the arm with the largest mean reward. The BAI problem is studied in two settings: (1) the fixed-budget setting considers a budget $T \in \mathbb{N}$ and aims to minimize the probability of failing to identify the best in at most $T$ arm pulls; (2) the fixed-confidence setting considers a confidence level $\delta \in (0, 1)$ and aims to minimize the average number of arm pulls while identifying the best arm with probability at least $1-\delta$. 

For the standard reward model with uncorrelated arms, the works in \citet{evans, karnin, kauffmann} and \citet{carpentier, audibert} consider the BAI problem in the fixed-confidence and fixed-budget settings, respectively. For the linear model, the works in \citet{soare, xu, fiez, tao, jedra, peleg} develop several algorithms under the fixed-confidence setting. For the linear model under the fixed-budget setting, \citet{hoffman} develop the first algorithm, BayesGap, which is a gap-based exploration algorithm using a Bayesian approach. \citet{katzsamuels} develop the Peace algorithm that has equally-sized rounds, where the arm-pulling strategy within each round is based on the Gaussian width of the underlying arm set. \citet{alieva} develop LinearExploration that exploits the linear structure of the model and is robust to unknown
levels of observation noise and misspecification in the linear model. \citet{yang} develop the Optimal-Design-Based Linear Best Arm Identification (OD-LinBAI) algorithm, which also employs almost equally-sized rounds, but the arm-pulling strategy within each round is based on the G-optimal design. In the first round, OD-LinBAI aggressively eliminates all empirically suboptimal arms except the top $\frac{d}{2}$ arms; in the subsequent rounds, half of the remaining arms are eliminated in each round until a single arm remains. \citet{azizi} develop the Generalized Successive Elimination (GSE) algorithm that has similar principles as OD-LinBAI with the difference that GSE eliminates the half of the remaining arms in all rounds. Among these algorithms, only OD-LinBAI is shown to be asymptotically minimax optimal.  

In many practical applications of MABs, there are a large number of features available to the learner, but only a few of these features significantly affect the value of the reward of an arm. Sparse linear bandits are a mathematical abstraction of this phenomenon by assuming that the $d$-dimensional unknown parameter $\ts$ in the linear model has only $s$ nonzero values, i.e., $\norm{\ts}_0 = s$, where $s$ is usually much smaller than $d$. The performance in the MAB problems (e.g., cumulative regret, probability of identification error) usually deteriorates as the ambient dimension $d$ increases. Therefore, the goal in the sparse setting is to design an algorithm whose performance is a function of $s$ but not $d$. Some works that study the regret minimization problem for sparse linear bandits include \citet{abbasi, kimpaik2019, hao, oh2021, ariu, li2022, jang2022, wang2023, chak2023}. The OFUL algorithm of \citet{abbasi} keeps track of a high probability confidence set for $\ts$ and pulls an arm that maximizes the reward with respect to the arm vectors and the confidence set for $\ts$. The DR Lasso algorithm of \citet{kimpaik2019} combines Lasso with a doubly-robust technique used in the missing data literature. The ESTC algorithm of \citet{hao} uses Lasso to estimate $\ts$ at the end of the first phase and then in the second phase commits to the best arm with respect to the Lasso estimate. The SA Lasso Bandit algorithm of \citet{oh2021} estimates $\ts$ at each time using Lasso and pulls the best arm with respect to the Lasso estimate. TH Lasso Bandit algorithm of \citet{ariu} estimates the support of $\ts$ using Lasso and a thresholding procedure at each time and pulls the best arm with respect to the ordinary least squares estimation restricted to the estimated support in the first phase. \citet{li2022} generalize the ESTC algorithm of \citet{hao} to general bandit problems with low-dimensional structures such as low-rank matrix bandits. The PopArt algorithm of \cite{jang2022} takes the population covariance of arms as input and uses a thresholding step to estimate $\ts$ in the first phase; in the second phase, it commits to the best arm with respect to the estimate of $\ts$ in the first phase. The LRP-Bandit algorithm of
\citet{wang2023} combines the thresholded Lasso with random projection where random projection is used to mitigate the
negative influence of model misspecification due to the Lasso phase; their algorithm is also computationally efficient since Lasso is computed only at times with exponentially increasing gaps.
Finally, \citet{chak2023} develop a Thompson Sampling algorithm for sparse linear contextual bandits. 

In this paper, we study the BAI problem in sparse linear bandits under the fixed-budget setting. To the best of our knowledge, this paper presents the first result on the BAI problem in linear bandits with sparse structure, and we show that our bound on the error probability is almost minimax optimal in the exponent.

\paragraph{Contributions} Our main contributions are summarized as follows.
\begin{enumerate}
    \item We design an algorithm, \emph{Lasso and Optimal-Design-  (Lasso-OD) based Linear Best Arm Identification}. This algorithm has two phases. In the first phase, we pull arms to estimate a support set $\hat{\mathcal{S}}$ that captures the support of the unknown parameter $\ts$ with high probability and has size as small as possible. This goal is accomplished by the thresholded Lasso (TL) introduced by \citet{zhou2009}. TL obtains an initial estimation $\tinit$ for the parameter $\ts$ from Lasso \citep{tibshirani} and passes it through an absolute value threshold to obtain $\tthres$. The support of $\tthres$ is the output of the first phase. In the second phase, we apply OD-LinBAI from \citet{yang}. Lasso-OD has~3 hyperparameters: (i) $T_1 < T$, the budget allocated for the first phase; (ii) $\lambda_{\mr{init}} > 0$, the parameter in the initial Lasso problem; and (iii) $\lambda_{\mr{thres}} > 0$, the threshold value in TL. The choice of the design matrix (i.e., number of times each arm is pulled) in the first phase is crucial in attaining a good performance. Inspired by \citet{hao}, we design it by maximizing the smallest eigenvalue of the Gram matrix associated with the design matrix; this is known as the {\em E-optimal design}~\cite[Sec.~7.5.2]{Boyd04}. 
    This particular choice minimizes an upper bound on a probability term related to the performance of TL. 

    \item We derive a non-asymptotic upper bound on the error probability of Lasso-OD as a function of the total budget $T$, the number of arms $K$, the ambient dimension $d$, the sparsity $s$, and the arm vectors $\va(k)$, $k = 1, \dots, K$, the first few suboptimality gaps, and the hyperparameters $T_1, \lambda_{\mr{init}}$, and $\lambda_{\mr{thres}}$. As a corollary to this bound, with the knowledge of $s$, we carefully choose the hyperparameters so that firstly,  with high probability, phase 1 selects all variables in $\ts$ and at most $s^2$ additional variables and secondly, the probability terms due to phases 1 and 2 are approximately ``balanced''.  This particular choice achieves the error probability $\exp \big\{ - \Omega \big(\frac{T}{ (\log_2 s) H_{2, \mr{lin}}(s + s^2)} \big)\big\}$ for fixed $s$, $T \to \infty$, $K$ and $d$ not growing exponentially with $T$ (see Corollary~\ref{cor:main}). Here, $H_{2, \mr{lin}}(s + s^2)$ is a hardness parameter that depends only on the first $s + s^2 - 1$ suboptimality gaps. Note that the exponent is independent of dimension $d$, implying that increase in $d$ does not significantly increase the error probability. For OD-LinBAI, this exponent is given by $\exp \big\{ - \Omega \big(\frac{T}{ (\log_2   d) H_{2, \mr{lin}}(d)} \big)\big\}$; therefore, Lasso-OD improves the error probability exponent by a factor of $\Omega(\frac{\log_2 d}{\log_2 s})$ for $d \geq s + s^2$. 
    \item We empirically compare the identification error of Lasso-OD with that of other existing algorithms in the literature on several synthetic datasets, including one that is a sparsity-based version of  examples used in other papers~\citep{jedra}. The empirical results support our theoretical result that claims that the scaling of the error probability of Lasso-OD is characterized by the sparsity $s$ while the performances of other algorithms significantly depend on $d$.
\end{enumerate}

\section{Problem Formulation}

We consider a standard linear bandit with $K$ arms with a $d$-dimensional unknown global parameter~$\ts$. Let the arm set be $[K] \triangleq \{1, \dots, K\}$, where each arm $k \in [K]$ is associated with a known arm vector $\va(k) \in \mathbb{R}^d$. A set of $K$ arms, $\{\va(1), \dots, \va(k)\}$, together with $\ts$ define a linear bandit instance $\eta$. At each time $t$, the agent chooses an arm $A_t \in [K]$ and observes a noisy reward
\begin{align} 
    \ry_t = \langle \ts, \va(A_t) \rangle + \epsilon_t,
\end{align}
where $\epsilon_1, \epsilon_2, \dots$ are independent 1-subgaussian noise variables. For the arm selection, the agent uses an online algorithm, that is, the arm pull $A_t \in [K]$ may depend only on the previous $t-1$ arm pulls $A_1, \dots, A_{t-1}$ and their corresponding rewards $\ry_1, \dots, \ry_{t-1}$. Denote the mean rewards of the arm vectors by
\begin{align}
    \mu_k \triangleq \langle \ts, \va(k) \rangle, \quad \forall \, k \in [K].
\end{align}
Without loss of generality, we assume that $\mu_1 > \mu_2 \geq \mu_3 \geq \dots \geq \mu_K$, i.e., arm 1 is the unique best arm. We denote the mean gaps by
$\Delta_k \triangleq \mu_1 - \mu_k$ for $2 \leq k \leq K$.

Under the fixed-budget setting of BAI, the agent is given a fixed time $T$, and makes an estimate $\hat{I}$ for the best arm with no more than $T$ arm pulls. The goal is to design an online algorithm with the identification error probability, $\mathbb{P}[\hat{I} \neq 1]$, as small as possible. 

\textbf{Notation: }
For any integer $n$, we denote $[n] \triangleq \{1, \dots, n\}$. Let $\vx = (x_1, \dots, x_d)$ be a $d$-dimensional vector and $\mathcal{S} \subseteq [d]$, we denote $\vx_{\mc{S}} \triangleq (x_s \colon s \in \mathcal{S}) \in \mathbb{R}^{|\mathcal{S}|}$. We denote $\norm{\vx}_{\mA} \triangleq \sqrt{\vx^\top \mA \vx}$. The minimum eigenvalue of a symmetric $\mA$ is denoted by $\sigma_{\min}(\mA)$. We denote the set of distributions on the set $\mathcal{A}$ as $\mathcal{P}(\mathcal{A})$. 
Let $A_1, \dots, A_t \in [K]$ be a sequence of arm pulls. The matrix $\mX \in \mathbb{R}^{t \times d}$ whose $j$-th row is $\va(A_j)^\top$ is called the \emph{design matrix}. Let $\nu \in \mathcal{P}([K])$ be the vector of fractions of arm pulls associated with this strategy, i.e., $\nu_k = \frac{1}{t}\sum_{j = 1}^t 1\{A_j = k\} $ for $k \in [K]$. The \emph{Gram matrix} associated with this strategy is denoted by $\mM(\nu) = \frac{1}{t} \mX^\top \mX = \sum_{k \in [K]} \nu_k \va(k) \va(k)^\top \in \mathbb{R}^{d \times d}$. When we use asymptotic notation such as $O(\cdot)$ and $\Omega(\cdot)$, somewhat unconventionally, we are referring to {\em nonnegative} sequences, e.g., $a_n \in O(b_n)$ if and only if $\limsup_{n\to\infty} \frac{a_n}{b_n }<\infty$ and $\{a_n\}_{n\ge1}$ is a nonnegative sequence.

\textbf{Model assumptions: } Denote the support of $\ts$ by $S(\ts) \triangleq \{j \in [d] \colon \theta^*_j \neq 0\}$. We assume that the unknown parameter $\ts$ and the arm vectors $\{\va(k)\}_{k \in [K]}$ are  of length $d$ 
but $\ts$ is sparse, i.e., the number of non-zero coefficients in $\ts$ satisfies $\norm{\ts}_0 \triangleq |S(\ts)| = s < d$. We assume that $S(\ts)$ is unknown, but $s$ and $\theta_{\min} \triangleq \min_{j \in S(\ts)} |\theta^*_j|$ are known. We further assume that $|\mu_k| \leq 1$ for all arms $k \in [K]$. 

\section{Our Algorithm: Lasso-OD}
We now present our algorithm, \emph{Lasso and Optimal-Design- (Lasso-OD) based linear best-arm identification} which has two phases. In phase 1, we pull a judiciously chosen set of arms to learn   the support of the unknown parameter $\ts$. Specifically, we design  phase 1 so that it outputs a subset of variables $\hat{\mathcal{S}} \subseteq [d]$ whose  support $\hat{\mathcal{S}}$ captures the true variables, $S(\ts)$, with high probability, and its cardinality $|\hat{\mathcal{S}}|$ is small. To do this, we use the thresholded Lasso introduced by \citet{zhou2009}. Once $\hat{\mathcal{S}}$ is obtained, we eliminate all variables in the arm vectors except the ones in $\hat{\mathcal{S}}$. Note that given that $\hat{\mathcal{S}} \supseteq S(\ts)$, this variable elimination would have no effect on the mean values $\mu_1, \dots, \mu_K$ since by assumption, we only eliminate some variables $j \in [d]$ with $\theta^*_j = 0$. Therefore, the best arm is also preserved after   variable elimination. Building upon this principle, 
in phase 2, we project the arms on the estimated support $\hat{\mc{S}}$ and pull arms according to the OD-LinBAI algorithm by \citet{yang}, which is designed for linear bandits without the sparsity assumption. 

\subsection{Motivation for Lasso-OD Algorithm}
OD-LinBAI used in phase 2 is a minimax optimal algorithm up to a multiplicative factor in the exponent in the sense that it achieves an asymptotic error probability $\exp \big\{ - \Omega \big(\frac{T}{ (\log_2  d) H_{2, \mr{lin}}(d)} \big)\big\}$, and for every algorithm, there exists a bandit instance $\eta$ whose asymptotic error probability is lower bounded by $\exp \big\{ - O \big(\frac{T}{ (\log_2  d) H_{2, \mr{lin}}(d)} \big)\big\}$. The hardness parameter 
\begin{align}
    H_{2, \mathrm{lin}}(d) \triangleq \max_{2 \leq i \leq d} \frac{i}{\Delta_i^2}
\end{align}
determines how difficult it is to identify the best arm for a given bandit instance $\eta$ \citep{yang}. For sparse linear bandits, if an oracle knew the support of the unknown parameter $\ts$, then the lower bound in \citet[Th.~3]{yang} would be improved to $\exp \big\{ - O \big(\frac{T}{ (\log_2  s) H_{2, \mr{lin}}(s)} \big)\big\}$. The purpose of TL in phase~1 is to provide an estimate for the support of $\ts$ with high accuracy while also pulling arms few enough that the resulting error probability is a function of $s$ rather than $d$ as in the oracle lower bound. 
Below, we provide the details on two phases of Lasso-OD.

\subsection{Phase 1 (TL)}
Consider a linear model $\rvy = \mX \ts + \bm{\epsilon}$, where $\mX \in \mathbb{R}^{T_1 \times d}$ is a fixed design matrix, $\ts \in \mathbb{R}^d$ is a fixed unknown feature vector, $\rvy \in \mathbb{R}^{T_1}$ is the response vector, and $\bm{\epsilon} \in \mathbb{R}^{T_1}$ is a noise vector whose entries are independent and 1-subgaussian. \citet{tibshirani} introduces the Lasso optimization problem to identify a sparse solution to the least squares estimation problem
\begin{align}
    \that_{\mr{init}} = \argmin_{\vtheta \in \mathbb{R}^d} \frac{1}{T_1} \norm{\rvy - \mX \vtheta}_2^2 + \lambda_{\mr{init}} \norm{\vtheta}_1, \label{eq:lassoinit}
\end{align}
where $\lambda_{\mr{init}} > 0$ is a suitably chosen regularization parameter. The Lasso \eqref{eq:lassoinit} is a convex program and can be solved efficiently, e.g., using Alternating Direction Method of Multipliers (ADMM) algorithm \citep{boyd2011}.

For the task of variable selection, i.e., recovering the support of the unknown parameter $\ts$ without missing any of its non-zero variables, we want to obtain an estimate $\that$ that satisfies $S(\that) \supseteq S(\ts)$ while ensuring that $| S(\that) \setminus S(\ts)|$ is as small as possible. \citet{zhou2009} introduces the following thresholding procedure that has this property
\begin{align}
(\tthres)_j &= (\tinit)_j \, 1\{ |(\tinit)_j| \geq \lambda_{\mathrm{thres}} \}, \quad \forall \, j \in [d], \label{eq:threshlas}
\end{align}
where the initial estimate $\that_{\mr{init}}$ is given in \eqref{eq:lassoinit}, and $\lambda_{\mathrm{thres}} > 0$ is the threshold. The set of selected variables by TL is $S(\tthres)$. A variation of TL is used by \citet{ariu} to derive refined regret guarantees in sparse stochastic contextual linear bandits. Their main idea is to find the support estimate $S(\tthres^{(t)})$ at each time instance $t$ using TL and then to compute the ordinary least squares (OLS) estimation restricted on the variables in  $S(\tthres^{(t)})$. \citet{ariu}  tune the free parameters $\lambda_{\mathrm{init}}^{(t)}$ and $\lambda_{\mathrm{thres}}^{(t)}$ in a way that with high probability, $S(\tthres^{(t)}) \supseteq S(\ts)$ and $S(\tthres^{(t)})$ is small enough, which is $s + O(\sqrt{s})$ in their case. Note that on the event $\{S(\tthres^{(t)}) \supseteq S(\ts)\}$, the OLS solution restricted on the subset $S(\tthres^{(t)})$ is equal to that for the unrestricted case where all $d$ variables are used.
Our approach is similar to that in \citet{ariu} in using TL to reduce the effective dimension of the problem.

Let $T_1 < T$ be the budget allocated to the variable selection procedure described above. 

\paragraph{Design matrix optimization} 
First, we need to specify the number of pulls for each arm during phase 1, which corresponds to determining the design matrix $\mX \in \mathbb{R}^{T_1 \times d}$ in the Lasso problem \eqref{eq:lassoinit}. To do this, we solve the optimization problem, known as the {\em E-optimal design}~\cite[Sec.~7.5.2]{Boyd04}, given by
\begin{align}
    \tilde{\nu}^\star = \argmax_{\nu \in \mathcal{P}([K])} \sigma_{\min}\left(\sum_{i = 1}^K \nu_i \va(i) \va(i)^\top \right). \label{eq:vtilde}
\end{align}
Since the function $\mA \mapsto \sigma_{\min}(\mA)$ is concave and $\nu \mapsto \sum_{i = 1}^K \nu_i \va(i) \va(i)^\top$ is linear, \eqref{eq:vtilde} is a convex optimization problem, and can be solved efficiently, for example, using the CVX toolbox \citep{boyd2011}. 

The design matrix determined by the allocation in \eqref{eq:vtilde} minimizes an upper bound on a probability term related to phase~1; hence, it approximately optimizes the penalty term due to incorrectly estimating the variables of $\ts$. More discussion on this choice of the design matrix appears in Appendix~\ref{app:design}. The optimization problem~\eqref{eq:vtilde} also appears in \citet{hao} on their regret analysis in sparse linear bandits. The allocation $\tilde{\nu}^\star$ can lead to fractional number of pulls $T_1 \tilde{\nu}^\star_i$ for some arm $i \in [K]$. To guarantee integer number of pulls for all arms, we apply a rounding procedure given in \citet[Ch.~12]{pukel}, the $\mathrm{ROUND}$ function in Appendix~\ref{app:algorithms}, which is also employed in the fixed-confidence BAI algorithm~in~\citet{fiez}. 

\paragraph{Support estimation}
We compute the number of pulls for each arm using \eqref{eq:vtilde} and $\mathrm{ROUND}$, and then estimate the support from \eqref{eq:lassoinit} and~\eqref{eq:threshlas}. Algorithm~\ref{alg:ESP} below delineates the pseudo-code of this procedure.
\begin{algorithm}[!htbp] 
   \caption{Thresholded Lasso (TL)}
   \label{alg:ESP}
\begin{algorithmic}[1]
\INPUT{Time budget $T_1$, Lasso parameters $\lambda_{\mathrm{init}}$ and $\lambda_{\mathrm{thres}}$, and arm vectors $\va(1), \dots, \va(K)$.}
\STATE Compute the arm pull fractions $\tilde{\nu}^*$ from \eqref{eq:vtilde}.
\STATE Update $\tilde{\nu}^* \gets \textrm{ROUND}(\tilde{\nu}^*, T_1)$ to ensure integer number of arm pulls.
\STATE Pull each arm $i \in [K]$ exactly $T_1 \tilde{\nu}^*_i$ times. Denote the vector of rewards by $\rvy \in \mathbb{R}^{T_1}$.
\STATE Form the design matrix $\mX \in \mathbb{R}^{T_1 \times d}$ so that it has $T_1 \tilde{\nu}^*_i$ rows equal to $\va(i)^\top$ for $i \in [K]$. Compute $\tthres$ from \eqref{eq:lassoinit} and~\eqref{eq:threshlas}.
\OUTPUT the support $\hat{\mathcal{S}} = S(\tthres)$.
\end{algorithmic}
\end{algorithm}

\subsection{Phase 2 (OD-LinBAI)}

In this section, we review the OD-LinBAI algorithm by \citet{yang}. 
OD-LinBAI divides the budget $T$ into $\lceil \log_2 d \rceil$ phases, where each phase has roughly the same length.  

At the start of round $r$, OD-LinBAI applies a dimensionality reduction step to maintain that the set of modified arms spans the space of its reduced dimension. 
The arm allocation during each round is determined by the {\em G-optimal design} \citep{kiefer}, which takes a  set of arm vectors $\{\va(1), \dots, \va(K)\} \subseteq \mathbb{R}^d$ and solves the optimization problem
\begin{align}
    \pi^* = \argmin_{\pi \in \mathcal{P}([K])} \max_{i \in [K]} \norm{ \va(i) }_{\mM(\pi)^{-1}}^2, \label{eq:goptimal}
\end{align}
where $\mM(\pi) \triangleq \sum_{i = 1}^K \pi_i \va(i) \va(i)^\top$ is the Gram matrix associated with the allocation $\pi$.  
At the start of each round, we solve \eqref{eq:goptimal} for the set of active arms and then apply the $\mr{ROUND}$ function in Appendix~\ref{app:algorithms} to the resulting allocation to ensure integer number of pulls. The latter step 
replaces the procedure in Line~17  \citet[Algorithm~1]{yang}. This slight modification may improve the performance of the algorithm especially if the budget $T$ is small. At the end of round 1, we eliminate all arms except the top $\lceil \frac{d}{2} \rceil$ with respect to the OLS estimator; in the rest, we halve the remaining arms at the end each round. At the end of last round, only one arm remains and that arm is declared to be the best one.  The pseudo-code of OD-LinBAI can be found in \citet{yang} and a slight modification of it which leads to the improved error probability bound in  Theorem~\ref{thm:ODLinBAI} can be found in Appendix~\ref{app:algorithms}.

\subsection{Lasso-OD Algorithm}
The pseudo-code of Lasso-OD described above is given in Algorithm~\ref{alg:lasso-OD}. Notice that since the two phases of Lasso-OD operate independently, one can replace either or both of TL and OD-LinBAI with their alternatives, e.g., the PopArt algorithm \citep{jang2022} and the adaptive Lasso \cite[Ch.~2.8]{vandegeerbook} for TL and any of the algorithms in \citet{alieva, katzsamuels, azizi, hoffman} for OD-LinBAI. We discuss some of the variants of our algorithm in Appendix~\ref{app:experiments}. 
\begin{algorithm}[!htbp] 
   \caption{Lasso and Optimal-Design Based Linear Best Arm Identification (Lasso-OD)}
   \label{alg:lasso-OD}
\begin{algorithmic}[1]
\INPUT Time budgets $T_1$ and $T_2$ so that $T = T_1 + T_2$, Lasso parameters $\lambda_{\mr{init}}$ and $\lambda_{\mr{thres}}$, and arm vectors $\va(1), \dots, \va(K) \in \mathbb{R}^d$. 
\STATE Run TL (Algorithm~\ref{alg:ESP}) with $T_1, \lambda_{\mathrm{init}}$, and $\lambda_{\mathrm{thres}}$ and get the output $\hat{\mathcal{S}} \subseteq [d]$.
\STATE Project the arm vectors on the subset $\hat{\mathcal{S}}$ by setting $\va'(i) = {(\va(i))}_{\hat{\mathcal{S}}}$ for $i \in [K]$.
\STATE Run OD-LinBAI from \citet{yang}  with budget $T_2$ and arm vectors $\{\va'(1), \dots, \va'(K)\} \subseteq \mathbb{R}^{|\hat{\mathcal{S}}|}$ with Line 17 of Algorithm 1 in \citet{yang} replaced by $\mathrm{ROUND}$.
\OUTPUT the only remaining arm $\hat{I}$ as the output of OD-LinBAI.
\end{algorithmic}
\end{algorithm}

\section{Main Results}
This section presents three non-asymptotic upper bounds on the performances of TL, OD-LinBAI, and Lasso-OD algorithms.
\subsection{Thresholded Lasso}
Recall the linear model $\rvy = \mX \ts + \bm{\epsilon}$, where $\mX \in \mathbb{R}^{T_1 \times d}$ is a fixed design matrix, $\ts \in \mathbb{R}^d$ is a fixed unknown feature vector, $\rvy \in \mathbb{R}^{T_1}$ is the response vector, and $\bm{\epsilon} \in \mathbb{R}^{T_1}$ is a noise vector whose entries are independent and 1-subgaussian. For any set $\mathcal{S} \subseteq [d]$, define the set of vectors
\begin{align}
    \mathbb{C}(\mc{S}) \triangleq \{\vtheta \in \mathbb{R}^d \colon \norm{\vtheta_{\mc{S}^\mathrm{c}}}_1 \leq 3 \norm{\vtheta_{\mc{S}}}_1 \}.
\end{align}
\citet{vandegeer} introduce the following {\em  compatibility condition} that allows one to control the $\ell_1$-norm error for the sparse estimation of the unknown parameter $\ts$ where the components of the design matrix $\mX$ are not highly correlated. 
For the rest of the section, let $\mM = \frac{1}{T_1} \mX^\top \mX$ denote the Gram matrix associated with $\mX$.
\begin{definition}[Compatibility condition] \label{def:comp}
Given a fixed design matrix $\mX \in \mathbb{R}^{T_1 \times d}$ (whose Gram matrix is $\mM$) and a subset $\mathcal{S} \subseteq [d]$, the compatibility constant $\phi^2(\mM, \mathcal{S})$ is defined as
\begin{align}
    \phi^2(\mM, \mathcal{S}) \triangleq \min_{\vtheta \in \mathbb{R}^d \colon \norm{\vtheta_{\mc{S}}}_1 \neq 0} \left\{ \frac{|\mathcal{S}| \norm{\vtheta}_{\mM}^2}{ \norm{\theta_{\mc{S}}}_1^2} \colon \vtheta \in \mathbb{C}(\mc{S}) \right \}. 
\end{align}
With some abuse of notation, we also define
\begin{align}
    \phi^2(\mM, s) \triangleq \min_{\mathcal{S} \subseteq [d] \colon |\mathcal{S}| = s} \phi^2(\mM, \mathcal{S}).
\end{align}
\end{definition}
The following result controls the $\ell_1$-norm error of the initial Lasso estimator in~\eqref{eq:lassoinit}.
\begin{lemma}[\citet{ariu}, Lemma G.6] \label{thm:lassol1}
    Assume that  $\phi^2(\mM, s) > 0$. The Lasso estimator $\tinit$ in~\eqref{eq:lassoinit} satisfies
    \begin{align}
        \Prob{\norm{\tinit - \ts}_1 \leq \frac{4 \lambda_{\mr{init}} s}{\phi^2(\mM, s)}} \geq 1 - 2 d \exp \left\{ - \frac{T_1 \lambda_{\mr{init}}^2}{32 \left( \frac{1}{T_1} \max_{j \in [d]} \norm{\mX_{:, j}}_2^2 \right)} \right\}.
    \end{align}
\end{lemma}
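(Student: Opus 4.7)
The plan is to follow the classical two-step Lasso oracle analysis: first, define a ``good event'' on which the noise $\bm{\epsilon}$ is controlled in the dual $\ell_\infty$ norm and bound its probability; second, on that event, combine the Lasso optimality inequality with the compatibility condition to derive the $\ell_1$-error bound.

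For the noise event, I would work on $\mc{E} = \{ \frac{1}{T_1}\norm{\mX^\top \bm{\epsilon}}_\infty \leq \lambda_{\mr{init}}/4 \}$. For each $j \in [d]$, the coordinate $(\mX^\top \bm{\epsilon})_j = \sum_{i=1}^{T_1} \mX_{i,j} \epsilon_i$ is a sum of independent $1$-subgaussian variables scaled by $\mX_{i,j}$, hence subgaussian with variance proxy $\norm{\mX_{:,j}}_2^2$. The standard subgaussian tail inequality applied at threshold $T_1 \lambda_{\mr{init}}/4$, followed by a union bound over $j \in [d]$, produces exactly the probability expression stated in the lemma.

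On $\mc{E}$, optimality of $\tinit$ in~\eqref{eq:lassoinit} combined with $\rvy = \mX \ts + \bm{\epsilon}$ gives the basic inequality $\frac{1}{T_1}\norm{\mX v}_2^2 \leq \frac{2}{T_1} \bm{\epsilon}^\top \mX v + \lambda_{\mr{init}}(\norm{\ts}_1 - \norm{\tinit}_1)$, where $v = \tinit - \ts$. Setting $S = S(\ts)$ and using H\"older's inequality together with the triangle-inequality bound $\norm{\ts}_1 - \norm{\tinit}_1 \leq \norm{v_{S}}_1 - \norm{v_{S^{\mathrm{c}}}}_1$ (from $(\ts)_{S^{\mathrm{c}}} = \vzero$) reduces this to
\begin{align}
\frac{1}{T_1}\norm{\mX v}_2^2 + \tfrac{\lambda_{\mr{init}}}{2}\norm{v_{S^{\mathrm{c}}}}_1 \leq \tfrac{3 \lambda_{\mr{init}}}{2}\norm{v_S}_1,
\end{align}
which yields the cone inclusion $v \in \mathbb{C}(S)$ (so $\norm{v}_1 \leq 4\norm{v_S}_1$) and, after adding $\tfrac{\lambda_{\mr{init}}}{2}\norm{v_S}_1$ to both sides, $\frac{1}{T_1}\norm{\mX v}_2^2 + \tfrac{\lambda_{\mr{init}}}{2}\norm{v}_1 \leq 2 \lambda_{\mr{init}} \norm{v_S}_1$.

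The compatibility condition (\defnref{def:comp}) gives $\norm{v_S}_1 \leq \sqrt{s}\,\norm{v}_\mM/\phi(\mM, s)$ with $\norm{v}_\mM^2 = \frac{1}{T_1}\norm{\mX v}_2^2$; substituting and invoking the weighted AM--GM inequality $2 a b \leq a^2/2 + 2 b^2$ (with $a = \norm{v}_\mM$ and $b = \lambda_{\mr{init}} \sqrt{s}/\phi(\mM, s)$) to absorb half of the $\ell_2$ term on the right produces the claimed bound $\norm{v}_1 \leq 4 \lambda_{\mr{init}} s / \phi^2(\mM, s)$. The whole argument is standard; the only delicate point is the bookkeeping so that precisely the constants $4$ (in the error bound) and $32$ (in the exponent) come out as stated---choosing the threshold $\lambda_{\mr{init}}/4$ in $\mc{E}$ and applying the weighted AM--GM with weights $(1/2, 2)$ at the last step are what reconciles them.
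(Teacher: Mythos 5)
Your proposal is correct and follows essentially the same route as the paper's proof, which establishes the oracle inequality of \lemref{lem:oracle} via the same noise event, basic inequality, cone inclusion, compatibility condition, and a quadratic absorption step (the paper uses $4uv \le u^2 + 4v^2$ where you use the equivalent $2ab \le a^2/2 + 2b^2$), and then passes from $\phi^2(\mM, S(\ts))$ to $\phi^2(\mM, s)$ exactly as you do. The constants $4$ and $32$ come out identically in both arguments.
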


Using \lemref{thm:lassol1}, we derive the following bound on the event that the size of the support of the TL output~\eqref{eq:threshlas} is below a threshold and it captures the true support $S(\ts)$. 
\begin{theorem}\label{thm:thres}
    Fix a design matrix $\mX \in \mathbb{R}^{T_1 \times d}$ and parameters $\lambda_{\mathrm{init}}, \lambda_{\mathrm{thres}} > 0$.  Let $b = \frac{4}{\phi^2(\mM, s)}$ and $c = \frac{\lambda_{\mathrm{thres}}}{\lambda_{\mathrm{init}}}$. Suppose that $\theta_{\min} \geq \lambda_{\mathrm{init}} \left( c + b s \right)$ holds. Then,
    \begin{align}
        \Prob{ \left\{ |S(\tthres)| \leq s \left( 1 + \frac{b}{c} \right) \right \} \bigcap \{ S(\tthres) \supseteq S(\ts) \} }\geq 1 - 2d  \exp \left\{ - \frac{T_1 \lambda_{\mathrm{init}}^2}{32 \left( \frac{1}{T_1} \max_{j \in [d]} \norm{\mX_{:, j}}_2^2 \right)} \right\}. \label{eq:cor}
    \end{align}
\end{theorem}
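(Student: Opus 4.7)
The plan is to condition on the high-probability event from Lemma~\ref{thm:lassol1}, namely
\[
\mc{E} \triangleq \left\{ \norm{\tinit - \ts}_1 \leq \frac{4 \lambda_{\mr{init}} s}{\phi^2(\mM, s)} = b\, \lambda_{\mr{init}}\, s \right\},
\]
which by assumption has probability at least the right-hand side of~\eqref{eq:cor}. On $\mc{E}$ I will establish the two set-inclusion/cardinality claims separately, since both will follow from the $\ell_1$ control together with the elementary bound $\max_j |(\tinit)_j - \theta^*_j| \leq \norm{\tinit - \ts}_1$.

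For the inclusion $S(\tthres) \supseteq S(\ts)$, take any $j \in S(\ts)$. The reverse triangle inequality gives $|(\tinit)_j| \geq |\theta^*_j| - |(\tinit)_j - \theta^*_j| \geq \theta_{\min} - b \lambda_{\mr{init}} s$, and the standing hypothesis $\theta_{\min} \geq \lambda_{\mr{init}}(c + bs)$ then yields $|(\tinit)_j| \geq c\,\lambda_{\mr{init}} = \lambda_{\mr{thres}}$, so $j \in S(\tthres)$ by the thresholding rule~\eqref{eq:threshlas}.

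For the cardinality bound, I will split $S(\tthres)$ into its intersection with $S(\ts)$ (which contributes at most $s$ indices) and its intersection with $S(\ts)^{\mr{c}}$. For $j \in S(\ts)^{\mr{c}}$, we have $\theta^*_j = 0$, so $j \in S(\tthres)$ forces $|(\tinit)_j - \theta^*_j| = |(\tinit)_j| \geq \lambda_{\mr{thres}} = c\, \lambda_{\mr{init}}$. Summing these lower bounds over such indices cannot exceed the $\ell_1$ error budget:
\[
|S(\tthres) \cap S(\ts)^{\mr{c}}| \cdot c\, \lambda_{\mr{init}} \;\leq\; \sum_{j \in S(\ts)^{\mr{c}}} |(\tinit)_j - \theta^*_j| \;\leq\; \norm{\tinit - \ts}_1 \;\leq\; b\, \lambda_{\mr{init}}\, s,
\]
giving $|S(\tthres) \cap S(\ts)^{\mr{c}}| \leq bs/c$ and hence $|S(\tthres)| \leq s(1 + b/c)$. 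Combining the two deterministic conclusions on $\mc{E}$ with $\Prob{\mc{E}}$ from Lemma~\ref{thm:lassol1} yields~\eqref{eq:cor}.

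There is no real obstacle: the argument is a standard pigeonhole/thresholding analysis once the $\ell_1$ oracle bound from Lemma~\ref{thm:lassol1} is in hand. The one point that deserves care is the lower bound $\theta_{\min} \geq \lambda_{\mr{init}}(c + bs)$, which must be tight enough that $b\lambda_{\mr{init}} s$ eats only the ``slack'' $\theta_{\min} - \lambda_{\mr{thres}}$; this is exactly why the hypothesis is stated in that form.
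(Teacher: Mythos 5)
Your proof is correct and follows essentially the same route as the paper: condition on the $\ell_1$-oracle event from Lemma~\ref{thm:lassol1}, use $\norm{\cdot}_\infty \le \norm{\cdot}_1$ with the hypothesis $\theta_{\min} \ge \lambda_{\mathrm{init}}(c+bs)$ to show no true variable is thresholded out, and use the pigeonhole bound $|S(\tthres)\setminus S(\ts)|\,\lambda_{\mathrm{thres}} \le \norm{\tinit-\ts}_1$ to control the false positives. No gaps.
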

The proofs of \lemref{thm:lassol1} and \thmref{thm:thres} are deferred to Appendix~\ref{app:lasso}. Theorem~\ref{thm:thres} follows steps similar to those in \citet[Lemma 5.4]{ariu}. The interested reader can refer to \citet[Ch.~6 and~7]{vandegeerbook} for more results and discussions on Lasso, TL, and their variants.

\subsection{An Improved Upper Bound on the Error Probability of OD-LinBAI}
The theorem below gives an improved upper bound on the error probability of OD-LinBAI \citep{yang}. 
\begin{theorem} \label{thm:ODLinBAI}
    Let $\tilde{T} = \left \lfloor \frac{T}{\lceil \log_2 d \rceil} \right \rfloor$. For any linear bandit instance, the output of OD-LinBAI satisfies
    \begin{align}
        \Prob{\hat{I} \neq 1} \leq (K + \log_2 d) \exp \left\{ - \frac{\tilde{T}}{16 \left( 1 + \frac{d^2}{\tilde{T}}\right)  H_{2, \mathrm{lin}}(d)} \right\}. \label{eq:ODLinBAI}
    \end{align}
\end{theorem}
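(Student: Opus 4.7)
The plan is to follow the structure of the original OD-LinBAI analysis in Yang and Tan (2022), replacing the last-round allocation rule (Line 17 of their Algorithm 1) with the Pukelsheim–Rieder ROUND function and carefully tracking how the rounding loss propagates into the exponent. Set $R = \lceil \log_2 d \rceil$, let $d_r$ denote the effective dimension at round $r$ (so $d_1 = d$ and $d_r \leq \lceil d / 2^{r-1} \rceil$ after the halving elimination steps), let $\mathcal{A}_r$ denote the set of active arms at round $r$, and let each round receive budget $T_r = \tilde{T} = \lfloor T / R \rfloor$. By Kiefer–Wolfowitz, the G-optimal design $\pi_r^\star$ at round $r$ achieves $\max_{i \in \mathcal{A}_r} \|\va'(i)\|_{\mM(\pi_r^\star)^{-1}}^2 \leq d_r$.

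First, I would quantify the rounding loss. The ROUND procedure from Pukelsheim (Ch.~12) produces integer pull counts $n_{i,r}$ with $\sum_i n_{i,r} = T_r$ whose empirical allocation $\nu_r$ satisfies an approximation inequality of the form $\max_i \|\va'(i)\|_{\mM(\nu_r)^{-1}}^2 \leq d_r \bigl(1 + d_r^2 / T_r\bigr)$, uniformly over rounds. Using $d_r \leq d$ and $T_r = \tilde{T}$, this becomes $d_r(1 + d^2 / \tilde{T})$, which is precisely where the extra $(1 + d^2/\tilde{T})$ factor in \eqref{eq:ODLinBAI} originates. Next, the round-$r$ OLS estimator $\hat{\vtheta}_r$ computed only on that round's data is $\|\va'(i)\|_{\mM(\nu_r)^{-1}}^2 / T_r$-subgaussian in direction $\va'(i)$, since $\epsilon_t$ is 1-subgaussian. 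Hence, for any $\varepsilon>0$ and any active arm $i$,
\begin{align}
\Prob{|\langle \hat{\vtheta}_r - \ts, \va'(i)\rangle| > \varepsilon}
\leq 2 \exp\!\left(-\frac{T_r \varepsilon^2}{2\, d_r(1+d^2/\tilde{T})}\right).
\end{align}

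Second, I would analyse the elimination event at each round. The best arm is eliminated at round $r$ only if at least $|\mathcal{A}_{r+1}|$ suboptimal arms from $\mathcal{A}_r$ have OLS-estimated rewards exceeding that of arm~$1$, so some gap $\Delta_i$ with $i \leq |\mathcal{A}_r|$ must be violated by at least $\Delta_i / 2$ by the concentration above. Choosing $\varepsilon = \Delta_{|\mathcal{A}_{r+1}|+1}/2$ and using $d_r \leq 2 |\mathcal{A}_{r+1}| \leq 2(|\mathcal{A}_{r+1}|+1)$, the definition $H_{2,\mathrm{lin}}(d) \geq (|\mathcal{A}_{r+1}|+1)/\Delta_{|\mathcal{A}_{r+1}|+1}^2$ gives $d_r / \Delta_{|\mathcal{A}_{r+1}|+1}^2 \leq 2 H_{2,\mathrm{lin}}(d)$. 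So each per-round failure probability is bounded by $|\mathcal{A}_r|\exp\bigl(-\tilde{T}/(16 (1 + d^2/\tilde{T}) H_{2,\mathrm{lin}}(d))\bigr)$. Taking a union bound over the $R \leq \log_2 d + 1$ rounds and using $\sum_{r=2}^R |\mathcal{A}_r| \leq \log_2 d$ when counted as "newly eliminated arms per round" plus $|\mathcal{A}_1| = K$ for round~$1$, I obtain the advertised prefactor $(K + \log_2 d)$.

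The main obstacle will be establishing the rounding-loss inequality $\max_i \|\va'(i)\|_{\mM(\nu_r)^{-1}}^2 \leq d_r(1 + d_r^2 / T_r)$ with exactly this constant: Pukelsheim's efficient-rounding guarantee is usually stated as $n_{i,r} \geq \lceil T_r \pi_{i,r}^\star \rceil - 1$, and translating this into a spectral bound on $\mM(\nu_r)^{-1}$ versus $\mM(\pi_r^\star)^{-1}$ requires a perturbation argument; matching the constant $16$ in the exponent and the multiplier $(1 + d^2/\tilde{T})$ (rather than, say, $(1 + c\, d^2/\tilde{T})$ for some $c>1$) will require that this perturbation step be tightened. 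A secondary bookkeeping issue is verifying that the union-bound counting really yields $K + \log_2 d$ rather than $K + d$; this hinges on charging each suboptimal arm only to the round in which it is first eliminated, exactly as in Yang and Tan's original argument, and therefore should carry over unchanged.
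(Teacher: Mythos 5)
Your proposal follows essentially the same route as the paper's proof, which itself is presented as a delta over Yang and Tan's argument: equal per-round budgets $\tilde{T}=\lfloor T/\lceil\log_2 d\rceil\rfloor$, the rounding-loss factor $d_r(1+d_r^2/\tilde{T})$, and the refined counting in which the best arm's elimination forces at least one arm with gap at least $\Delta_{\lceil d/2^r\rceil+1}$ to beat it, which is exactly what converts the original constant $32$ into $16$ and the prefactor into $K+\log_2 d$. The ``main obstacle'' you identify is not one: the inequality $\max_i\norm{\va_r(i)}_{\mM(\tilde{\pi}^{(r)})^{-1}}^2\le d_r(1+d_r^2/\tilde{T})$ for Pukelsheim's rounding is quoted verbatim from \citet[Appendix B]{fiez}, so no new perturbation argument is needed.

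Two small points where your version would drift from the stated bound. First, you bound per-arm deviations $|\langle\hat{\vtheta}_r-\ts,\va'(i)\rangle|>\Delta/2$ and union over the two arms, which costs an extra factor of $2$ in the prefactor; the paper instead applies subgaussian concentration directly to the difference $\hat{\mu}_r(i)-\hat{\mu}_r(1)$, whose variance proxy is at most $4\max_j\norm{\va_r(j)}_{\mM^{-1}}^2/\tilde{T}$ by the triangle inequality, recovering the same exponent with prefactor $1$ and hence exactly $K+\log_2 d$ overall. Second, your union-bound accounting as written ($\sum_{r\ge2}|\mathcal{A}_r|\le\log_2 d$) is not literally true; the correct bookkeeping is the paper's: the round-$r$ bad set has size $|\mathcal{A}_{r-1}|-\lceil d/2^r\rceil$, i.e., $K-d/2$ for $r=1$ and at most $d/2^r+1$ for $r\ge2$, and these telescope to $K+\log_2 d$.
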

The right-hand side of \eqref{eq:ODLinBAI} is slightly different than the one presented in \citet[Th.~2]{yang}. First, in \citet[Th.~2]{yang}, the numerator in the exponent is equal to some constant $m$ that is approximately equal to $\frac{T}{\log_2 d}$ just like $\tilde{T}$; this is due to the modification in the distribution rounding technique. 
Second, the pre-factor in \citet[Th.~2]{yang} is $\frac{4K}{d} + 3 \log_2 d$ instead of (our smaller) $K + \log_2 d$. More importantly, in \eqref{eq:ODLinBAI}, the constant $32$ in the denominator of the exponent in \citet[Th.~2]{yang} is improved to~$16$. The last two differences are due to a refinement in the proof technique. Lastly, our result includes a rounding error factor $1 + \frac{d^2}{\tilde{T}}$, which becomes negligible as $T$ becomes large. This factor appears due to the fact that the G-optimal design may yield fractional number of pulls for some arms, which is obviously not allowed in practice. The proof of Theorem~\ref{thm:ODLinBAI} is deferred to Appendix~\ref{app:proofOD}.

\subsection{Upper Bound on the Error Probability of Lasso-OD}
The theorem below bounds the probability of incorrectly identifying the best arm using Lasso-OD.
\begin{theorem}\label{thm:main}
Let $T_1 < T$ be the length of phase 1, and let $T_2 = T - T_1$ be the length of phase~2. Let $\lambda_{\mr{init}}$ and $\lambda_{\mr{thres}}$ be some positive scalars. Let $c = \frac{\lambda_{\mr{thres}}}{\lambda_{\mr{init}}}$. Let $\tilde{\nu}^*$ be the solution to \eqref{eq:vtilde}, and let $\tilde{\nu} = \mathrm{ROUND}(\tilde{\nu}^*, T_1)$ be its rounded version for length $T_1$. Suppose that $b = \frac{4}{\phi^2(\tilde{\nu}, s)} > 0$ and $\theta_{\min} \geq \lambda_{\mr{init}} (c + bs)$. For any linear bandit instance, the output of Algorithm~\ref{alg:lasso-OD} satisfies
\begin{align}
    \Prob{\hat{I} \neq 1} \leq (K + \log_2 d) \exp\left\{ - \frac{ \left \lfloor \frac{T_2}{\log_2(s_1)} \right \rfloor}{16 \left(1 + \epsilon \right) H_{2, \mr{lin}} (s_1) } \right\} + 2d \exp \left\{ - \frac{T_1 \lambda_{\mr{init}}^2}{32 x_{\max}^2}  \right\}, \label{eq:maineq}
\end{align}
where
\begin{align}
    s_1 &= \left \lfloor s \left(1 + \frac{b}{c} \right) \right \rfloor, \quad
    x_{\max}^2 = \max \limits_{j \in [d]}  \sum_{k = 1}^K \tilde{\nu}_k (\va(k)_j)^2, \quad\mbox{and}\quad
    \epsilon = \frac{s_1^2}{T_2}. \label{eq:s1}
\end{align}
\end{theorem}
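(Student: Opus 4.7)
The plan is to decompose the error probability by conditioning on the success of Phase~1. Define the good event
\[
E_1 \triangleq \{\hat{\mathcal{S}} \supseteq S(\ts)\} \cap \{|\hat{\mathcal{S}}| \leq s_1\},
\]
and apply the union bound
\[
\Prob{\hat{I} \neq 1} \leq \Prob{\hat{I} \neq 1,\, E_1} + \Prob{E_1^{\mathrm{c}}}.
\]
Each term will then be bounded by a direct application of one of the two earlier results (Theorems~\ref{thm:thres} and~\ref{thm:ODLinBAI}), and the quantities $s_1$, $\epsilon$, and $x_{\max}^2$ in the statement are exactly what comes out when the hypotheses of those theorems are specialized to the Phase-1 design $\tilde{\nu}$.

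For the Phase-1 tail, I would apply Theorem~\ref{thm:thres}. Because each arm $k$ contributes $T_1 \tilde{\nu}_k$ identical rows to $\mX$, the associated Gram matrix is $\mM = \frac{1}{T_1}\mX^\top \mX = \sum_{k} \tilde{\nu}_k \va(k) \va(k)^\top$, so $\phi^2(\mM, s) = \phi^2(\tilde{\nu}, s)$ and the constant $b = 4/\phi^2(\tilde{\nu}, s)$ from the theorem statement matches the $b$ in Theorem~\ref{thm:thres}. The same row-counting calculation gives
\[
\frac{1}{T_1}\max_{j \in [d]}\norm{\mX_{:,j}}_2^2 = \max_{j \in [d]}\sum_{k=1}^K \tilde{\nu}_k (\va(k)_j)^2 = x_{\max}^2.
\]
The hypothesis $\theta_{\min} \geq \lambda_{\mr{init}}(c + bs)$ is precisely the condition required by Theorem~\ref{thm:thres}, and by definition of $s_1$ in~\eqref{eq:s1}, the size bound $|S(\tthres)| \leq s(1 + b/c)$ implies $|\hat{\mathcal{S}}| \leq s_1$. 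Theorem~\ref{thm:thres} therefore yields $\Prob{E_1^{\mathrm{c}}} \leq 2d\exp\{-T_1 \lambda_{\mr{init}}^2 /(32 x_{\max}^2)\}$, which is exactly the second term of~\eqref{eq:maineq}.

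To control $\Prob{\hat{I} \neq 1,\, E_1}$, the key observation is that on $E_1$ every coordinate dropped by the projection onto $\hat{\mathcal{S}}$ has $\theta^*_j = 0$, so $\langle \ts_{\hat{\mathcal{S}}}, \va'(k)\rangle = \mu_k$ for every arm $k$. Thus Phase~2 runs OD-LinBAI on a linear bandit instance with the same $K$ arms, the same best arm, and identical suboptimality gaps $\Delta_2, \ldots, \Delta_K$, but in the reduced dimension $|\hat{\mathcal{S}}| \leq s_1$. Applying Theorem~\ref{thm:ODLinBAI} with budget $T_2$ and effective dimension $|\hat{\mathcal{S}}|$, and then enlarging to $s_1$ in every place where the bound is non-decreasing in the dimension (the hardness parameter $H_{2,\mr{lin}}(\cdot)$, the $\log_2(\cdot)$ factor, and the rounding factor $1 + (\cdot)^2/\tilde{T}_2$), produces the first term of~\eqref{eq:maineq} with $\epsilon = s_1^2/T_2$.

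The main technical hurdle is bookkeeping rather than deep analysis: because $\tilde{\nu}$ and the Phase-1 pull schedule are deterministic, the Phase-2 analysis factors cleanly through the event $E_1$, and there is no subtle dependence to disentangle. The delicate steps are (i)~matching the compatibility constant, the column-norm bound $x_{\max}^2$, and the $\theta_{\min}$ condition to Theorem~\ref{thm:thres}'s hypotheses after the rounding $\tilde{\nu} = \mathrm{ROUND}(\tilde{\nu}^*, T_1)$, and (ii)~justifying the replacement of the random $|\hat{\mathcal{S}}|$ by the deterministic upper bound $s_1$ inside the OD-LinBAI bound via the monotonicity noted above.
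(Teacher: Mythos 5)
Your proposal is correct and follows essentially the same route as the paper's proof: the paper likewise splits on the event $\mc{E}\cap\mc{F}$ (your $E_1$), bounds its complement via Theorem~\ref{thm:thres} with the Gram matrix of the rounded allocation $\tilde{\nu}$, and bounds the conditional error via Theorem~\ref{thm:ODLinBAI} with $T$ replaced by $T_2$ and $d$ replaced by $s_1$, invoking the same monotonicity-in-dimension and preservation-of-means arguments you give. Your write-up is in fact somewhat more explicit than the paper's about the bookkeeping (matching $x_{\max}^2$ and the compatibility constant, and the independence of the two phases' noise).
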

\begin{proof}
The proof uses Theorems~\ref{thm:thres} and \ref{thm:ODLinBAI} for the probability terms due TL and OD-LinBAI, respectively. 
Let $\hat{\mc{S}} \subseteq [d]$ denote the output of phase 1. Define the events $\mc{E} \triangleq \{ |\hat{\mc{S}}| \leq s_1\}$ and $\mc{F} \triangleq \{ \hat{\mc{S}} \supseteq S(\ts) \}$. By the law of total probability, we have
\begin{align}
    \Prob{\hat{I} \neq 1} 
    &\leq  \Prob{\hat{I} \neq 1 \middle | \mc{E} \cap \mc{F}} + \Prob{\mc{E}^{\mr{c}} \cup \mc{F}^{\mr{c}}}. \label{eq:Icond}
\end{align}
Given $\mc{E} \cap \mc{F}$, the error probability is bounded by the right-hand side of \eqref{eq:ODLinBAI} with the budget $T$ replaced by the length of phase 2, $T_2$, and with the dimension $d$ replaced by $s_1$. This follows since on the event $\mc{F}$, the mean rewards are preserved after the arm vectors and $\ts$ are projected on $\hat{\mc{S}}$ and since the right-hand side of \eqref{eq:ODLinBAI} is non-decreasing in $d$. From Theorem~\ref{thm:thres} and the arm-pulling strategy described in Line 2 of Algorithm~\ref{alg:lasso-OD}, we have
\begin{align}
    \Prob{\mc{E}^{\mr{c}} \cup \mc{F}^{\mr{c}}} \leq 2d \exp \left\{ - \frac{T_1 \lambda_{\mr{init}}^2}{32 x_{\max}^2}  \right\}. \label{eq:lassoapplied}
\end{align}
Combining \eqref{eq:Icond} with \eqref{eq:ODLinBAI} and \eqref{eq:lassoapplied}, we complete the proof. 
\end{proof}
The following corollary is obtained by choosing the free parameters $T_1, \lambda_{\mr{init}}$, and $\lambda_{\mr{thres}}$ suitably to meet the conditions of Theorem~\ref{thm:main}. These nontrivial choices use the knowledge of $\theta_{\min}$ and $s$ but not the hardness parameter and achieve an exponent of the error probability that depends only on $s$, $T$, and the hardness parameter.
\begin{corollary} \label{cor:main}
    For any linear bandit instance, it holds that
    \begin{align}
        \Prob{\hat{I} \neq 1} \leq (K + \log_2 d + 2d) \exp \left \{ - \frac{T}{16 \lfloor\log_2(s + s^2) \rfloor (1 + \epsilon) H_{2, \mr{lin}}(s + s^2) (1 + c_0)} \right \},
    \end{align}
    where
    \begin{align}
        c_0 &= \frac{25 b^2 x_{\max}^2}{3 \theta_{\min}^2 \log_2(s + s^2)}\quad \mbox{and}\quad
        \epsilon = \frac{(1 + c_0)(s + s^2)^2}{T}. \label{eq:c0}
    \end{align}
\end{corollary}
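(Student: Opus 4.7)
The plan is to apply Theorem~\ref{thm:main} with an explicit choice of the three free parameters $(T_1,\lambda_{\mr{init}},\lambda_{\mr{thres}})$ and verify that the resulting bound has the claimed form. Three things must be engineered: (i) the ratio $b/c$ is set so that $s_1 = s + s^2$; (ii) $\lambda_{\mr{init}}$ is chosen as large as the constraint $\theta_{\min} \geq \lambda_{\mr{init}}(c+bs)$ allows, so the Lasso exponent is as large as possible; and (iii) the budget is split as $T = T_1 + T_2$ so that both exponents in Theorem~\ref{thm:main} meet a common target rate $E := T/[16\log_2(s+s^2)\,H_{2,\mr{lin}}(s+s^2)(1+\epsilon)(1+c_0)]$.

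First, I would take $c = b/s$, giving $s_1 = \lfloor s(1+b/c)\rfloor = s + s^2$ and $\lambda_{\mr{thres}} = (b/s)\lambda_{\mr{init}}$. Second, the $\theta_{\min}$ constraint becomes $\lambda_{\mr{init}} \leq \theta_{\min}/[b(s+1/s)]$; I choose $\lambda_{\mr{init}}$ of the form $C_1\theta_{\min}/(bs)$ with an absolute constant $C_1$ pinned down at the end of the derivation. Third, I would set $T_1 = \lfloor c_0 T/(1+c_0)\rfloor$ and $T_2 = T - T_1 \geq T/(1+c_0)$; then the quantity $\epsilon := s_1^2/T_2$ appearing in Theorem~\ref{thm:main} becomes $(1+c_0)(s+s^2)^2/T$, which matches \eqref{eq:c0}.

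With this budget split, the first (OD-LinBAI) exponent in Theorem~\ref{thm:main} satisfies
\[
\frac{T_2/\log_2(s_1)}{16(1+\epsilon)H_{2,\mr{lin}}(s_1)} \;\geq\; \frac{T}{16(1+c_0)\log_2(s+s^2)H_{2,\mr{lin}}(s+s^2)(1+\epsilon)} \;=\; E,
\]
so the first term of the bound in Theorem~\ref{thm:main} is at most $(K+\log_2 d)\,e^{-E}$. It remains to verify that the Lasso exponent $E_2 := T_1\lambda_{\mr{init}}^2/(32x_{\max}^2)$ is also at least $E$; substituting $T_1 = c_0 T/(1+c_0)$, this is equivalent to
\[
c_0\,\lambda_{\mr{init}}^2\,\log_2(s+s^2)\,H_{2,\mr{lin}}(s+s^2)(1+\epsilon) \;\geq\; 2x_{\max}^2.
\]
Using $\lambda_{\mr{init}}^2 \propto \theta_{\min}^2/(b^2s^2)$ and the lower bound $H_{2,\mr{lin}}(s+s^2) \geq (s+s^2)/\Delta_{s+s^2}^2 \geq s^2/4$ (which uses $|\mu_k|\leq 1$, hence $\Delta_i\leq 2$), the smallest $c_0$ for which the inequality holds is precisely the one announced in \eqref{eq:c0}, after the constant $C_1$ is tuned so that the residual numerical factors consolidate into the $25/3$. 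A union bound through Theorem~\ref{thm:main} then yields the stated prefactor $K + \log_2 d + 2d$ multiplying $e^{-E}$.

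The main obstacle is the bookkeeping of absolute constants: three sources of slack must be tracked simultaneously to land on $25/3$ rather than a loose $\Theta(1)$ constant, namely the $s + 1/s$ versus $s$ slack in the $\lambda_{\mr{init}}$ constraint, the $H_{2,\mr{lin}}(s_1) \geq s_1/4$ slack coming from $\Delta_i \leq 2$, and the $1/(1+c_0)$ slack from the budget split. Handling the floors in $T_1,T_2$ and the $\mathrm{ROUND}$ step in the design matrix contributes only lower-order corrections that are absorbed into the $(1+\epsilon)$ factor.
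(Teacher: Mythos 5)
Your proposal is correct and follows essentially the same route as the paper's proof in Appendix~\ref{app:maincor}: set $c=b/s$ so that $s_1=s+s^2$, take $\lambda_{\mr{init}}$ of order $\theta_{\min}/(b\sqrt{s+s^2})$ to saturate the $\theta_{\min}\ge\lambda_{\mr{init}}(c+bs)$ constraint, split the budget via $c_0=T_1/T_2$ to balance the two exponents of Theorem~\ref{thm:main}, and invoke $H_{2,\mr{lin}}(s+s^2)\ge (s+s^2)/4$ from $|\mu_k|\le 1$. One small note: to land exactly on $25/3$ you must keep the bound $H_{2,\mr{lin}}(s+s^2)\ge (s+s^2)/4$ rather than the weakened $s^2/4$ you quote, since the latter leaves a residual factor $(1+s)/s$ in $c_0$.
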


Here, $c_0 = \frac{T_1}{T_2}$ is the fraction of lengths of two phases of Lasso-OD, and $1 + \epsilon$ is the penalty due to rounding. Since $c_0$ in~\eqref{eq:c0} is lower bounded by a positive constant for all $s \in \mathbb{N}$,
Corollary~\ref{cor:main} implies that the error probability of Lasso-OD is upper bounded by
\begin{align}
    \exp \left\{ - \Omega \left(\frac{T}{ (\log_2 s) H_{2, \mr{lin}}(s + s^2)} \right)\right\} \label{eq:upper}
\end{align}
for fixed $s$, $T \to \infty$, and $K$ and $d$ not growing exponentially with $T$. Therefore, unlike the non-sparse case in \cite{yang}, the error probability exponent is {\em independent of the dimension $d$}, but instead, depends on the sparsity $s$, which yields much smaller error probabilities for high dimensional sparse linear bandits. The parameter choices that achieve the exponent in \eqref{eq:upper} is nontrivial; we carefully choose $\lambda_{\mr{init}}$ and $\lambda_{\mr{thres}}$ so that $c_0$ is decreasing in $s$ and choose $T_1$ so that two exponents in~\eqref{eq:maineq} emanating from phases 1 and 2 are ``balanced''. The proof of Corollary~\ref{cor:main} is presented in Appendix~\ref{app:maincor}.

Assume that the agent knows the support of $\ts$. Then, following the construction in the proof of \citet[Th.~3]{yang}, for any algorithm, there exists a bandit instance whose error probability is lower bounded by $ \exp \big\{ - O\big(\frac{T}{ (\log_2 s) H_{2, \mr{lin}}(s)} \big)\big\}$. This implies that the upper bound in~\eqref{eq:upper} is indeed almost minimax optimal in the exponent. In Appendix~\ref{app:experiments}, we develop
a variant of Lasso-OD, called PopArt-OD, which replaces TL in phase 1 of Lasso-OD with PopArt from \citet{jang2022}. Thanks to the fact that PopArt provides a guarantee on the $\ell_\infty$ norm of the difference between the estimated parameter $\bm{\theta}'$ and $\ts$, we derive an upper bound on the probability $\Prob{\hat{\mc{S}}_{\mr{PA}} \neq S(\ts)}$, where $\hat{\mc{S}}_{\mr{PA}}$ denotes the estimated support using the PopArt algorithm. Using this bound, we show that the error probability of PopArt-OD is upper bounded by $ \exp \big\{ - O\big(\frac{T}{ (\log_2 s) H_{2, \mr{lin}}(s)} \big)\big\}$, matching the lower bound up to a constant factor in the exponent. Due to its superior empirical performance over PopArt-OD, we focus on Lasso-OD in the paper.

 \section{Experiments} \label{sec:experiments}
 In this section, we numerically evaluate the performance of Lasso-OD on several synthetic sparse linear bandit instances and compare it with those of OD-LinBAI \citep{yang}, BayesGap \citep{hoffman}, GSE \citep{azizi}, Peace \citep{katzsamuels}, and LinearExploration \citep{alieva}. In each setting, we report the empirical error probabilities for Lasso-OD, BayesGap, and GSE over 4000 independent trials and for Peace and LinearExploration over 100 independent trials.
\begin{figure}[t]
\includegraphics[width=0.87\textwidth]{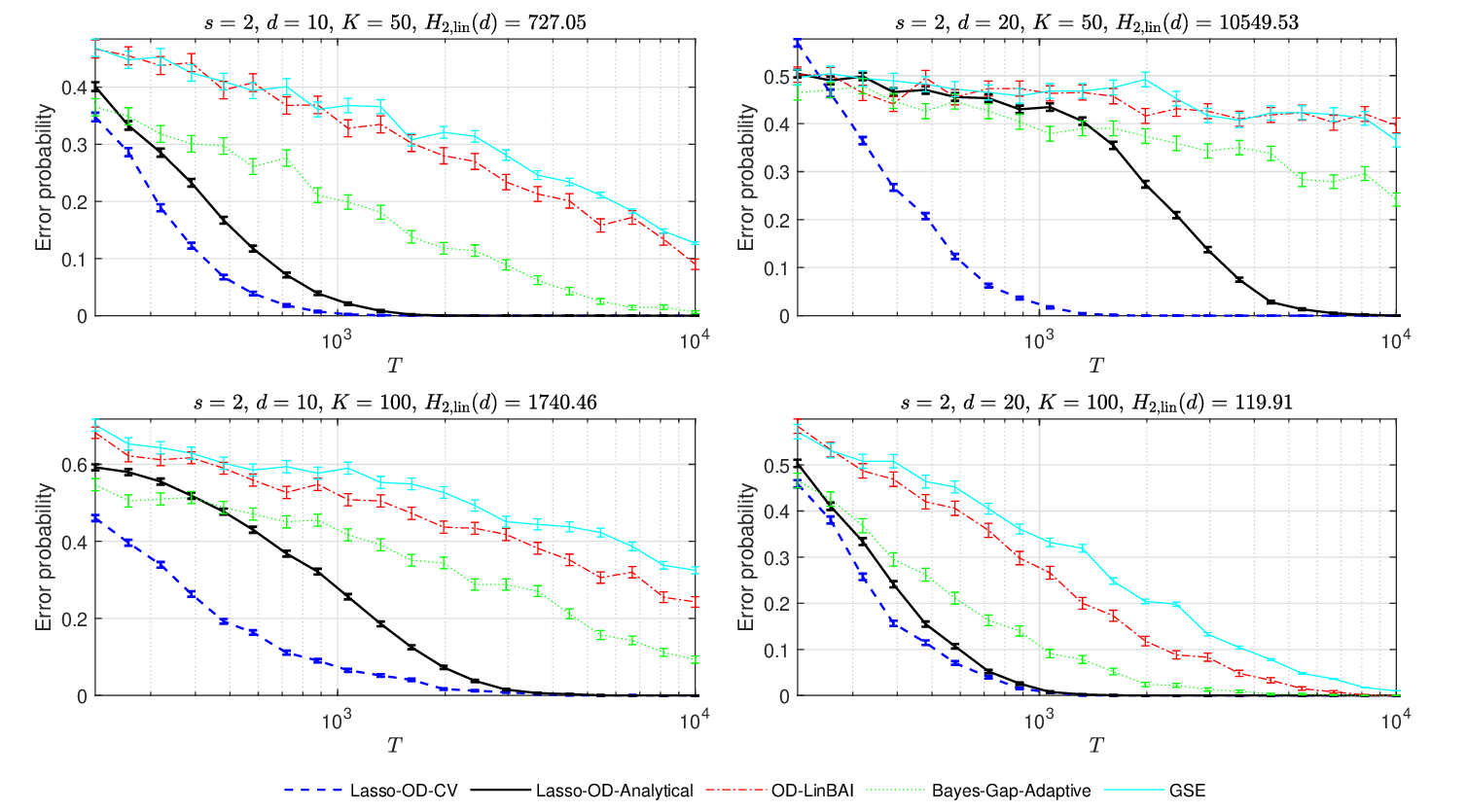}
\centering
\caption{Comparison of several algorithms with $T \in [200, 10000]$ and $s = 2$.} \vspace{-.1in}
\label{fig:plots}
\end{figure}
 \subsection{Synthetic Sparse Dataset} 
 In the first example, we draw $K$ arms independently from the uniform distribution on the $d$-dimensional sphere of radius $\sqrt{d / s}$, i.e., $\big\{x \in \mathbb{R}^d \colon \norm{x}_2^2 = \frac{d}{s} \big\}$, and the sparse unknown parameter is taken as $\ts = (1, 1, 0, \dots, 0)$, i.e., $s = 2$. Figure~\ref{fig:plots} reports the empirical error probabilities for $d \in \{10, 20\}$, $K \in \{50, 100\}$ and $T \in [200, 10000]$, except Peace \citep{katzsamuels} and LinearExploration \citep{alieva}. Since the computational complexities of Peace and LinearExploration are much higher than the rest of the algorithms, we compare Lasso-OD with Peace and LinearExploration only for $T = 800$ in Table~\ref{tab:800}. Among these algorithms, Lasso-OD has the best performance for all sparse instances shown in  Figure~\ref{fig:plots} and Table~\ref{tab:800}.
 \begin{table}[t]
 \caption{Performance comparison of several algorithms for $T = 800$, $d = 10$, $K =50$, and $s = 2$.}
 \centering
\begin{tabular}{p{3cm} 
p{2.2cm}p{2.2cm}p{2.2cm}p{2.2cm}}
\toprule
\multicolumn{1}{l}{} & Lasso-OD-CV & Lasso-OD-An. & Peace & LinearExploration \\ \midrule
Error probability    & 0.0275   & 0.045        & 0.40  & 0.39              \\ 
Std. deviation   & 0.0026   & 0.0033       & 0.049 & 0.0153            \\ \bottomrule
\end{tabular}
\label{tab:800}
\end{table}

\begin{figure}[t]
\includegraphics[width=0.87\textwidth]{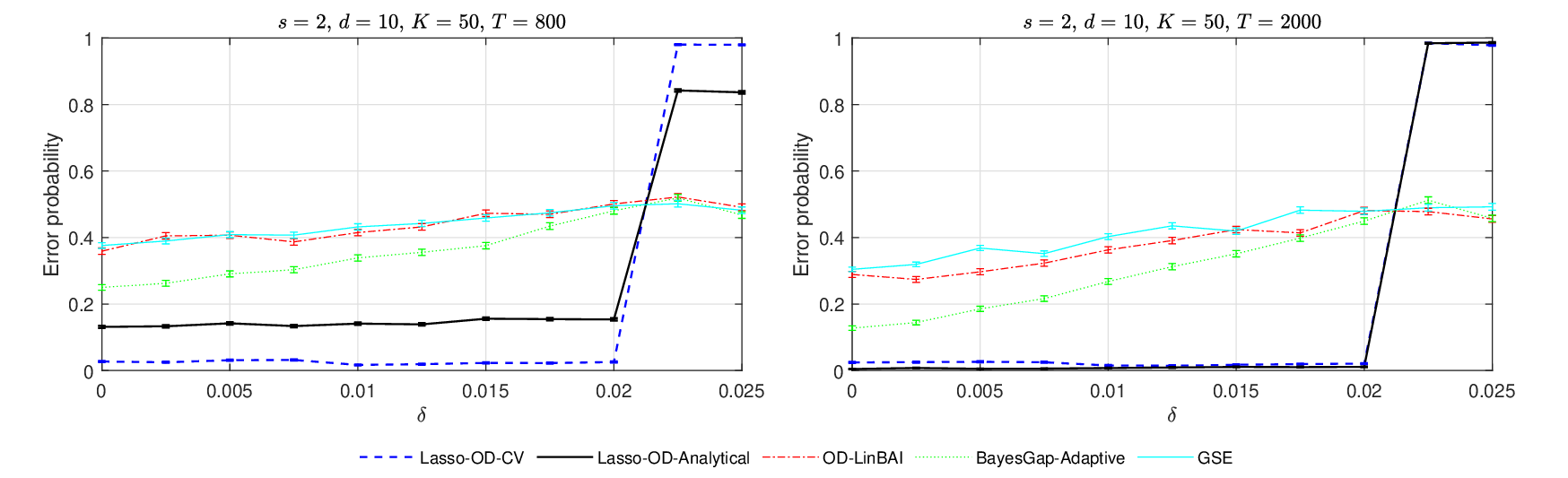}
\centering
\caption{Comparison of several algorithms with $T \in \{800, 2000\}$, $s = 2$, and $\delta \in [0, 0.025]$.} \vspace{-.1in}
\label{fig:robust}
\end{figure}
 
 Lasso-OD-CV sets the budgets for phase 1 and phase 2 as $T_1 = \frac{T}{5}$ and $T_2 = \frac{4 T}{5}$ and tunes the Lasso parameters $\lambda_{\mr{init}}$ and $\lambda_{\mr{thres}}$ using a $K$-fold cross-validation procedure that uses the value of $s$ in its loss function. See~\appref{app:CV} for the details of the cross-validation procedure. As an alternative to cross-validation, Lasso-OD-Analytical uses the knowledge of $s$, $\theta_{\min}$, and the hardness parameter $H_{2, \mr{lin}}(s_1)$ in~\eqref{eq:maineq}, and sets  $\lambda_{\mr{init}}$, $\lambda_{\mr{thres}}$, and $T_1$ so that $s_1 $ in~\eqref{eq:s1} equals $s + s^2$, $\theta_{\min} = \lambda_{\mr{init}} (c + bs)$, and two exponents in~\eqref{eq:maineq} are equal. Note that $H_{2, \mr{lin}}(s_1)$ is usually not available to the agent.

In the second example, we test the robustness of our algorithm with respect to the variables in $\ts$ that are assumed to be zero by keeping the same arms as in the previous example and setting $\ts$ as $\theta^*_j = 1$ for $j \in [2]$, and $\theta^*_j = \delta R_j$ for $j \in \{3, \dots, d\}$, where $R_j$, $j = 3, \dots, d$, are independent Rademacher  (i.e., $\{\pm 1\}$-valued) random variables, and $\delta > 0$ is a constant.  Figure~\ref{fig:robust} reports the empirical error probabilities for $s = 2$, $d = 10$, $K = 50$, $T \in \{800, 2000\}$, and $\delta \in [0, 0.025]$. The phase transition for Lasso-OD in Figure~\ref{fig:robust} suggests that Lasso-OD achieves a smaller error probability as long as $\delta$ is small enough that the approximately sparse instance (i.e., $\delta > 0$) and the sparse instance (i.e., $\delta = 0$) have the same best arm. Some examples including an instance where the hyperparameters are set as in \corref{cor:main} without cross-validation or knowing the hardness parameter are discussed in \appref{app:experiments}.

 \section{Conclusion}
 In this work, we study the BAI problem in linear bandits with sparse structure under fixed-budget setting and develop the first BAI algorithm, Lasso-OD, that exploits the sparsity of the unknown parameter $\ts$. Lasso-OD combines TL for support estimation with the minimax optimal BAI algorithm, OD-LinBAI. We analyze the error probability of Lasso-OD and show that the error exponent depends on the sparsity $s$ rather than the dimension $d$. Unlike other algorithms in the literature, the empirical performance of Lasso-OD does not deteriorate at large dimensions.

  One future direction is to derive an instance-dependent asymptotic or non-asymptotic lower bound for the BAI problem in sparse linear bandits; however, such a bound remains open even in the non-sparse scenario.  Another possible direction is to extend the TL technique used in Lasso-OD to the fixed-confidence setting. Although such an extension is relatively easy to analyze, the empirical performances of most fixed-confidence BAI algorithms in linear bandits are not heavily dependent on the dimension unlike the fixed-budget setting (see, for example, \citet{peleg, tao, fiez}). Therefore, the benefit of adding a TL phase in the fixed-confidence setting could be limited.

\bibliography{tmlr}
\bibliographystyle{tmlr}

\appendix
\onecolumn
\section{Design Matrix Optimization in Thresholded Lasso}\label{app:design}

The performance of TL in Lasso-OD is characterized by the probability term in \eqref{eq:cor}. If we relax the quantity $\frac{1}{T_1} \max_{j \in [d]} \norm{\mX_{:, j}}_2^2$ in \eqref{eq:cor} by its upper bound $\max_{k \in [K]} \norm{\va(k)}_{\infty}^2$, \thmref{thm:thres} implies that the performance of TL depends on the design matrix $\mX$ through $b$, and the best choice of $\mX$ maximizes the compatibility constant $\phi^2(\mM, s)$.

\paragraph{Computation of the compatibility constant} Let $\nu \in \mathcal{P}_{T_1}([K])$ be the $T_1$-type distribution describing the fractions of the number of pulls for each arm. Then, $\mM = \frac{1}{T_1} \mX^\top \mX = \sum_{i \in [K]} \nu_i \va(i) \va(i)^\top$. Rewriting the compatibility constant $\phi^2(\mM, \mathcal{S})$ from Definition~\ref{def:comp}, with some overload of notation, we obtain
\begin{align}
    \phi^2(\nu, \mathcal{S}) &\triangleq \phi^2(\mM, \mathcal{S}) = \min_{\vtheta \in \mathbb{R}^d} \left\{ |\mathcal{S}| \norm{\vtheta}_{\sum_{i \in [K]} \nu_i \va(i) \va(i)^\top}^2   \colon \norm{\vtheta_{\mathcal{S}}}_1 = 1,  \norm{\vtheta_{\mathcal{S}^{\mathrm{c}}}}_1 \leq 3 \right \}\label{eq:phi2} \\
    \phi^2(\nu, s) &\triangleq \min_{\mathcal{S} \subseteq [d] \colon |\mathcal{S}| = s} \phi^2(\nu, \mathcal{S}). \label{eq:phinu}
\end{align}
Given a fixed $\nu$, the program in \eqref{eq:phi2} is non-convex due to the $\ell_1$-norm equality constraint; however, by introducing binary variables, it can be turned into a mixed-integer discipled convex program (MIDCP) and be efficiently solved using CVX toolbox \citep{boyd}. If we relaxed the equality constraint to $\norm{\vtheta_{\mathcal{S}}}_1 \leq 1$, then \eqref{eq:phi2} would be a quadratic program~(QP). 

\paragraph{Relaxing the optimization problem}
According to the arguments above, the optimization problem that we originally need to solve is
\begin{align}
    \nu^* = \argmax_{\nu \in \mathcal{P}_{T_1}([K])} \phi^2(\nu, s), \label{eq:vstar}
\end{align}
which is computationally intractable since the maximization constraint makes it an integer program; and even if we relaxed it to allow fractional number of pulls, the program would involve $\binom{d}{s} \approx d^s$ MIDCPs in its constraints. 
\begin{lemma} \label{lem:sigma}
For any $\nu \in \mathcal{P}([K])$ and any $\mathcal{S} \subseteq d$, it holds that $\phi^2(\nu, \mathcal{S}) \geq \sigma_{\min}(\sum_{i = 1}^K \nu_i \va(i) \va(i)^\top)$. 
\end{lemma}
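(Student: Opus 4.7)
The plan is to unfold the definition of the compatibility constant in \eqref{eq:phi2} and bound each factor separately using a Rayleigh-quotient argument and Cauchy–Schwarz. Concretely, for any feasible $\vtheta$ (i.e., satisfying $\norm{\vtheta_\mathcal{S}}_1 = 1$ and $\norm{\vtheta_{\mathcal{S}^{\mr c}}}_1 \leq 3$), I would chain two elementary inequalities:
\begin{align}
|\mathcal{S}| \, \norm{\vtheta}_{\mM}^2 \;\geq\; |\mathcal{S}| \, \sigma_{\min}(\mM) \, \norm{\vtheta}_2^2 \;\geq\; |\mathcal{S}| \, \sigma_{\min}(\mM) \, \norm{\vtheta_{\mathcal{S}}}_2^2 \;\geq\; \sigma_{\min}(\mM).
\end{align}

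First, I would invoke the variational characterization of the smallest eigenvalue, $\vtheta^\top \mM \vtheta \geq \sigma_{\min}(\mM) \norm{\vtheta}_2^2$, which holds for any symmetric positive semidefinite $\mM = \sum_i \nu_i \va(i)\va(i)^\top$. Second, I would discard the $\mathcal{S}^{\mr c}$ coordinates (which only decreases the $\ell_2$ norm) to get $\norm{\vtheta}_2^2 \geq \norm{\vtheta_\mathcal{S}}_2^2$. Third, I would apply Cauchy–Schwarz on the $|\mathcal{S}|$-dimensional subvector $\vtheta_\mathcal{S}$ to write $\norm{\vtheta_\mathcal{S}}_1 \leq \sqrt{|\mathcal{S}|} \, \norm{\vtheta_\mathcal{S}}_2$, which combined with the constraint $\norm{\vtheta_\mathcal{S}}_1 = 1$ gives $\norm{\vtheta_\mathcal{S}}_2^2 \geq 1/|\mathcal{S}|$. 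Plugging this back cancels the $|\mathcal{S}|$ factor and yields the claimed lower bound uniformly over feasible $\vtheta$.

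Taking the infimum on the left-hand side over all feasible $\vtheta$ then gives $\phi^2(\nu, \mathcal{S}) \geq \sigma_{\min}(\mM)$, which is precisely the inequality we want. Since no constraint of the feasibility set (in particular, the $\ell_1$ cone condition $\norm{\vtheta_{\mathcal{S}^{\mr c}}}_1 \leq 3$) was actually used beyond providing feasibility, the bound is rather loose but suffices for this lemma, which will serve to replace the intractable program \eqref{eq:vstar} by the tractable E-optimal design \eqref{eq:vtilde}.

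There is no real obstacle: the proof is two lines of Rayleigh quotient plus Cauchy–Schwarz. The only thing to be slightly careful about is the corner case when the feasible set of \eqref{eq:phi2} imposes $\norm{\vtheta_\mathcal{S}}_1 \neq 0$ (so the minimum is well-defined) and the degenerate case $\sigma_{\min}(\mM) = 0$, in which case the inequality is trivial. Both cases are handled automatically by the chain above.
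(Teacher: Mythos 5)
Your proof is correct and follows essentially the same route as the paper's one-line justification: Cauchy--Schwarz on the $\mathcal{S}$-coordinates (giving $\norm{\vtheta_{\mathcal{S}}}_1^2 \leq |\mathcal{S}|\,\norm{\vtheta_{\mathcal{S}}}_2^2$), monotonicity of the $\ell_2$ norm under coordinate restriction, and the Rayleigh-quotient bound $\norm{\vtheta}_{\mM}^2 \geq \sigma_{\min}(\mM)\norm{\vtheta}_2^2$, with the cone constraint simply discarded. The paper's displayed chain places the $|\mathcal{S}|$ factor on the wrong side of the Cauchy--Schwarz inequality (an apparent typo), but the intended argument is identical to yours.
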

Lemma~\ref{lem:sigma} follows from $|\mathcal{S}| \norm{\vtheta{\mathcal{S}}}_1^2 \leq \norm{\vtheta{\mathcal{S}}}_2^2 \leq \norm{\theta}_2^2$ and relaxing the inequality constraint in \eqref{eq:phi2}. 

Replacing $\phi^2(\nu, \mathcal{S})$ by its lower bound and allowing fractional number of pulls, we get the relaxed optimization problem in \eqref{eq:vtilde}, which can be solved efficiently.

\section{Pseudo-codes of ROUND and OD-LinBAI} \label{app:algorithms}
The pseudo-codes of the rounding procedure from \citet[Ch.~12]{pukel} that is used in Algorithm~\ref{alg:ESP} and OD-Lasso from \citet{yang} are given below.
\label{app:ODLinBAI}
\begin{algorithm}
    \caption{ROUND$(\pi, T)$}
   \label{alg:round}
\begin{algorithmic}[1]
\INPUT a distribution $\pi$ on a set with cardinality $d$ and a positive integer $T$.
\STATE Initialize $T_i = \lceil (T - \frac{d}{2}) \pi_i \rceil$ for $i = 1, \dots, d$. 
\WHILE{ $\sum_{i = 1}^d T_i \neq T$ }
\IF{ $\sum_{i = 1}^d T_i < T$ }
\STATE Set $j \gets \arg \min_{i \in [d]} \frac{T_i}{\pi_i}$. Update $T_j \gets T_j + 1$. 
\ELSIF{  $\sum_{i = 1}^d T_i > T$}
\STATE Set $j \gets \arg \max_{i \in [d]} \frac{T_i - 1}{\pi_i}$. Update $T_j \gets T_j - 1$. 
\ENDIF
\ENDWHILE
\OUTPUT Distribution $\tilde{\pi} = \left( \frac{T_1}{T}, \dots, \frac{T_d}{T} \right)$.
\end{algorithmic}
\end{algorithm}

\begin{algorithm}[!htbp]
   \caption{Optimal Design-Based Linear Best Arm Identification (OD-LinBAI)}
   \label{alg:ODLINBAI}
\begin{algorithmic}[1]
   \INPUT time budget $T$, arm set $\mathcal{A} = [K]$, and arm vectors $\{\va(1), \dots, \va(K)\} \in \mathbb{R}^d$.
   \STATE Initialize $t_0 \gets 0$,  $\mathcal{A}_0 \gets \mathcal{A}$, $d_0 \gets d$. For each $i \in \mathcal{A}_0$, set $\va_0(i) = \va(i)$. Set $R = \lceil \log_2 d \rceil$, $T_r = \left \lfloor \frac{T}{R} \right \rfloor$ for $r = 1, \dots, R - 1$, and $T_R = T - \sum_{i = 1}^{R-1} T_i$.

   \FOR{$r=1$ {\bfseries to} $R$}
   \STATE $\backslash \backslash$ Dimensionality reduction:
   \STATE Set $\mX$ so that its columns are $\{ \va_{r-1}(i) \colon i \in \mathcal{A}_{r-1}\}$. Set $d_r \gets \mathrm{rank}(\mX)$. Set $\va_r(i) \gets \va_{r-1}(i)$ for $i \in \mathcal{A}_{r-1}$.
   \IF{$d_r < d_{r-1}$}
   \STATE Find the singular value decomposition of $\mX = \mU \mD {\mV}^\top$, where $\mU \in \mathbb{R}^{d_{r-1} \times d_r}$. 
   \STATE Update $\mX \gets \mU^\top \mX$ and $\va_r(i) \gets \mX_i$ for $i \in \mathcal{A}_{r-1}$.
    \ENDIF
    \STATE $\backslash \backslash$ G-optimal design:
    \STATE Input the set $\{\va_r(i) \colon i \in \mathcal{A}_{r-1}\}$ to the G-optimal design, and set $\pi^{(r)}$ as the output of~\eqref{eq:goptimal}. 
    \STATE Set $\tilde{\pi}^{(r)} = \textrm{ROUND}(\pi^{(r)}, T_r)$ from Algorithm 2.
    \STATE $\backslash \backslash$ Arm pulling:
    \STATE Pull each arm $i \in \mathcal{A}_{r-1}$ $T_r(i) = \tilde{\pi}^{(r)}_i T_r$ times, which determines $A_{t_{r-1} + 1}, \dots, A_{t_{r-1} + T_r}$. Observe the corresponding rewards $\ry_{t_{r-1} + 1}, \dots, \ry_{t_{r-1} + T_r}$.
    \STATE Compute the OLS estimator
    \begin{align}
        \mV^{(r)} &= \sum_{i \in \mathcal{A}_{r-1}} T_r(i) \va_r(i) \va_r(i)^{\top} \\
        \that^{(r)} &= {\mV^{(r)}}^{-1} \sum_{t = t_{r-1} + 1}^{t_{r-1} + T_r} \va_r(A_t) \ry_t.
    \end{align}
    \STATE $\backslash \backslash$ Arm elimination:
    \STATE Estimate the mean rewards for each $i \in \mathcal{A}_{r-1}$ as
    \begin{align}
        \hat{\mu}_r(i) = \langle \that^{(r)}, \va_r(i) \rangle. 
    \end{align}
    Set $\mathcal{A}_{r} \gets $ the set of $\lceil \frac{d}{2^r} \rceil$ arms in $\mathcal{A}_{r-1}$ with the largest estimated mean rewards. 
    Set $t_r \gets t_{r-1} + T_r$. 
   \ENDFOR
    \OUTPUT $\hat{I} = $ the only remaining arm in $\mathcal{A}_R$. 
   
\end{algorithmic}
\end{algorithm}

\section{Proofs Related to Lasso}\label{app:lasso}
In the following, let $n$ be the number of samples. The linear model is given by $\rvy = \mX \ts + \bm{\epsilon}$, where $\rvy \in \mathbb{R}^n$ are the rewards, $\mX \in \mathbb{R}^{n \times d}$ is the design matrix, and $\bm{\epsilon} \in \mathbb{R}^n$ are i.i.d. 1-subgaussian random variables. Recall the initial Lasso estimator
\begin{align}
    \that = \argmin_{\vtheta \in \mathbb{R}^d} \frac{1}{n} \norm{\rvy - \mX \vtheta}_2^2 + \lambda \norm{\vtheta}_1. \label{eq:lasso}
\end{align}

Define the event
\begin{align}
    \mathcal{T} = \left\{\max_{j \in [d]} \frac{1}{n} |\mX_{:, j}^\top \bm{\epsilon}| \leq \frac{\lambda}{4} \right\}.
\end{align}
The following result, known as the oracle inequality, is the main tool to control the performance of the initial lasso estimator. 
\begin{lemma}[Oracle Inequality: Theorem 6.1 from \cite{vandegeerbook}]\label{lem:oracle}
    On the event $\mathcal{T}$, the initial Lasso estimator $\that$ \eqref{eq:lasso} satisfies
    \begin{align}
        \norm{\mX(\that - \ts)}_2^2 + \lambda \norm{\that - \ts}_1 \leq \frac{4 \lambda^2 s}{\phi^2(\mM, S(\ts))}. \label{eq:oracle}
    \end{align}
    Furthermore, it holds that
    \begin{align}
        \Prob{\mathcal{T}} \geq 1 - 2 d \exp \left\{ - \frac{n \lambda^2}{32 \left( \frac{1}{n} \max_{j \in [d]} \norm{\mX_{:, j}}_2^2 \right)} \right\}. \label{eq:T}
    \end{align}
    
\end{lemma}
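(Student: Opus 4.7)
The plan is to split \lemref{lem:oracle} into two essentially independent pieces: a deterministic oracle-type bound that holds pointwise on the event $\mathcal{T}$, and a probabilistic tail bound that shows $\mathcal{T}$ occurs with high probability. The former is pure algebra exploiting the optimality of $\that$; the latter is a standard sub-Gaussian concentration bound combined with a union bound over the $d$ coordinates.

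For the deterministic step, I would begin from the \emph{basic inequality} obtained by comparing the Lasso objective at $\that$ with its value at $\ts$:
\[
\tfrac{1}{n}\|\rvy - \mX\that\|_2^2 + \lambda \|\that\|_1 \leq \tfrac{1}{n}\|\rvy - \mX\ts\|_2^2 + \lambda \|\ts\|_1 .
\]
Substituting $\rvy = \mX\ts + \bm{\epsilon}$ and expanding the quadratic produces
\[
\tfrac{1}{n}\|\mX(\that - \ts)\|_2^2 \leq \tfrac{2}{n}\, \bm{\epsilon}^\top \mX(\that - \ts) + \lambda\bigl(\|\ts\|_1 - \|\that\|_1\bigr).
\]
On $\mathcal{T}$, Hölder's inequality bounds the noise term by $\tfrac{\lambda}{2}\|\that - \ts\|_1$. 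I would then decompose both $\ell_1$-norms relative to $S = S(\ts)$: since $\ts_{S^{\mathrm{c}}} = 0$, the triangle inequality gives $\|\ts\|_1 - \|\that\|_1 \leq \|(\that - \ts)_S\|_1 - \|\that_{S^{\mathrm{c}}}\|_1$. Rearranging yields simultaneously the ``cone'' conclusion $\|\that_{S^{\mathrm{c}}}\|_1 \leq 3\|(\that - \ts)_S\|_1$, i.e.\ $\that - \ts \in \mathbb{C}(S)$, and
\[
\tfrac{1}{n}\|\mX(\that - \ts)\|_2^2 + \tfrac{\lambda}{2}\|\that-\ts\|_1 \leq 2\lambda \|(\that - \ts)_S\|_1 .
\]
Now \defnref{def:comp} applies and gives $\|(\that-\ts)_S\|_1 \leq \sqrt{s}\,\|\that-\ts\|_{\mM}/\phi(\mM, S(\ts))$. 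A weighted Young's inequality $2uv \leq \tfrac{1}{2}u^2 + 2v^2$ applied with $u = \|\that-\ts\|_{\mM}$ and $v = \lambda\sqrt{s}/\phi(\mM, S(\ts))$ absorbs the quadratic term on the left and delivers the claimed constant $4\lambda^2 s/\phi^2(\mM, S(\ts))$.

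For the probabilistic step, each $\tfrac{1}{n}\mX_{:,j}^\top \bm{\epsilon}$ is a weighted sum of independent $1$-sub-Gaussian coordinates, hence sub-Gaussian with parameter $\|\mX_{:,j}\|_2/n$. The standard sub-Gaussian tail bound gives
\[
\Prob{\bigl|\tfrac{1}{n}\mX_{:,j}^\top \bm{\epsilon}\bigr| > \tfrac{\lambda}{4}} \leq 2\exp\!\Bigl\{-\tfrac{n \lambda^2}{32 \cdot \tfrac{1}{n}\|\mX_{:,j}\|_2^2}\Bigr\},
\]
and a union bound over $j \in [d]$ produces exactly~\eqref{eq:T}.

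The main technical subtlety is the coupling between the cone condition $\that - \ts \in \mathbb{C}(S(\ts))$ and the oracle inequality itself: the compatibility constant $\phi^2(\mM, S(\ts))$ only controls the $\ell_1$-norm of vectors that already belong to $\mathbb{C}(S(\ts))$, so both conclusions must be extracted simultaneously from a single chain of inequalities. The threshold $\lambda/4$ hardwired into the definition of $\mathcal{T}$ is precisely tuned so that the noise contribution is dominated by half of the penalty term, which is what makes this bootstrap go through.
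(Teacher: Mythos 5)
Your proposal is correct and follows essentially the same route as the paper's proof: the basic inequality from optimality of $\that$, Hölder on the event $\mathcal{T}$ (equivalent to the paper's $\lambda_0 = \lambda/2$), the simultaneous extraction of the cone condition and the bound $\tfrac{1}{n}\|\mX(\that-\ts)\|_2^2 + \tfrac{\lambda}{2}\|\that-\ts\|_1 \leq 2\lambda\|(\that-\ts)_S\|_1$, the compatibility constant, and a Young-type absorption (your $2uv \le \tfrac12 u^2 + 2v^2$ is just the paper's $4uv \le u^2 + 4v^2$ rescaled), with the identical sub-Gaussian tail plus union bound for \eqref{eq:T}. Note that your derivation, like the paper's own, actually yields $\tfrac{1}{n}\|\mX(\that-\ts)\|_2^2 + \lambda\|\that-\ts\|_1$ on the left-hand side, so the missing $\tfrac{1}{n}$ in the lemma statement is a normalization typo rather than a gap in your argument.
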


\begin{proof}[Proof of Lemma~\ref{lem:oracle}] 
Since $\that$ minimizes \eqref{eq:lasso}, we have
\begin{align}
    \frac{1}{n} \norm{ \rvy - \mX \that}_2^2 + \lambda \norm{\that}_1 \leq \frac{1}{n} \norm{ \rvy - \mX \ts}_2^2 + \lambda \norm{\ts}_1. \label{eq:mainineq}
\end{align}
Plugging $\rvy = \mX \ts + \bm{\epsilon}$ into \eqref{eq:mainineq}, after some algebra, we get the basic inequality
\begin{align} 
    \frac{1}{n} \norm{ \mX (\that - \ts)}_2^2 + \lambda \norm{\that}_1 \leq \frac{2}{n} \bm{\epsilon}^\top \mX (\that - \ts) + \lambda \norm{\ts}_1.
\end{align}
Let $\tilde{\mc{T}}$ be the event
\begin{align}
    \tilde{\mc{T}} = \left \{ \max_{j \in [d]} \frac{2}{n} |\bm{\epsilon}^\top \mX_{:, j} | \leq \lambda_0 \right\}.
\end{align}
Then, on $\tilde{\mc{T}}$, we have using the H\"older inequality that
\begin{align}
    \frac{1}{n} \norm{ \mX (\that - \ts)}_2^2  \leq \lambda_0 \norm{\that - \ts}_1 + \lambda{\norm{\ts}}_1 - \lambda \norm{\that}_1. \label{eq:X1}
\end{align}
Let $\mc{S} = S(\ts)$. By the triangle inequality, we have
\begin{align}
    \norm{\that}_1 = \norm{\that_{\mc{S}}}_1 + \norm{\that_{\mc{S}^{\mr{c}}}}_1 \geq \norm{\ts_{\mc{S}}}_1 - \norm{\that_{\mc{S}} - \ts_{\mc{S}}}_1 + \norm{\that_{\mc{S}^{\mr{c}}}}_1. \label{eq:holderused}
\end{align}
Applying \eqref{eq:holderused} to \eqref{eq:X1}, we get
\begin{align}
     \frac{1}{n} \norm{ \mX (\that - \ts)}_2^2 &\leq \lambda_0 \left( \norm{\that_{\mc{S}} - \ts_{\mc{S}}}_1 + \norm{\that_{\mc{S}^{\mr{c}}} - \ts_{\mc{S}^{\mr{c}}}}_1 \right) \notag \\
     &\quad + \lambda \left( \norm{\ts}_1 - \norm{\ts_{\mc{S}}}_1 + \norm{\that_{\mc{S}} - \ts_{\mc{S}}}_1 - \norm{\that_{\mc{S}^{\mr{c}}}}_1 \right) \\
     &= (\lambda_0 + \lambda) \norm{\that_{\mc{S}} - \ts_{\mc{S}}}_1 + (\lambda_0 - \lambda) \norm{\that_{\mc{S}^{\mr{c}}} - \ts_{\mc{S}^{\mr{c}}}}_1, \label{eq:lambdaineq}
\end{align}
where the last step uses the fact that $\ts_{\mc{S}^{\mr{c}}} = 0$. We set $\lambda_0 = \frac{\lambda}{2}$. Then, \eqref{eq:lambdaineq} implies that on the event $\mc{T}$, 
\begin{align}
    \norm{\that_{\mc{S}^{\mr{c}}} - \ts_{\mc{S}^{\mr{c}}}}_1 \leq 3 \norm{\that_{\mc{S}} - \ts_{\mc{S}}}_1.
\end{align}
Therefore, $\that - \ts \in \mathbb{C}(\mc{S})$, and from the definition of compatibility constant in Definition~\ref{def:comp}, we have
\begin{align}
    \norm{\that_{\mc{S}} - \ts_{\mc{S}}}_1 \leq \frac{  \sqrt{ s (\that - \ts)^\top \mX^\top \mX (\that - \ts) }}{\sqrt{n} \phi(\mM, \mc{S})}. \label{eq:compused}
\end{align}

We now continue with \eqref{eq:lambdaineq} with $\lambda_0 = \frac{\lambda}{2}$. We have
\begin{align}
    \frac{2}{n} \norm{ \mX (\that - \ts)}_2^2 + \lambda \norm{\that - \ts}_1 &= \frac{2}{n} \norm{ \mX (\that - \ts)}_2^2 + \lambda \norm{\that_{\mc{S}} - \ts_{\mc{S}}}_1 + \lambda \norm{\that_{\mc{S}^{\mr{c}}} - \ts_{\mc{S}^{\mr{c}}}}_1 \\
    &\leq (3 \lambda + \lambda) \norm{\that_{\mc{S}} - \ts_{\mc{S}}}_1 - \lambda \norm{\that_{\mc{S}^{\mr{c}}} - \ts_{\mc{S}^{\mr{c}}}}_1 + \lambda \norm{\that_{\mc{S}^{\mr{c}}} - \ts_{\mc{S}^{\mr{c}}}}_1 \\
    &= 4 \lambda \norm{\that_{\mc{S}} - \ts_{\mc{S}}}_1 \\
    &\leq \frac{4 \lambda \sqrt{s} \norm{\mX (\that - \ts)}_2}{\sqrt{n} \phi(\mM, \mc{S})} \label{eq:compstep} \\
    &\leq \frac{1}{n} \norm{\mX (\that - \ts)}_2^2  + \frac{4 \lambda^2 s}{\phi^2(\mM, \mc{S})}, \label{eq:u2}
\end{align}
where \eqref{eq:compstep} applies \eqref{eq:compused}, and \eqref{eq:u2} applies the inequality $4 uv \leq u^2 + 4 v^2$ to \eqref{eq:compstep}. Inequality \eqref{eq:u2} completes the proof of \eqref{eq:oracle}.

Next, we upper bound the probability $\Prob{\tilde{\mc{T}}^{\mr{c}}}$. We have
\begin{align}
    \Prob{\tilde{\mc{T}}^{\mr{c}}} &= \Prob{\bigcup_{j \in [d]} \frac{1}{n} |\mX_{:, j}^\top \bm{\epsilon}| > \frac{\lambda}{4}} \\
    &\leq \sum_{j = 1}^d \left(\Prob{ \frac{1}{n} \mX_{:, j}^\top \bm{\epsilon} > \frac{\lambda}{4}} + \Prob{ - \frac{1}{n} \mX_{:, j}^\top \bm{\epsilon} > \frac{\lambda}{4}} \right) \\
    &\leq 2 \sum_{j = 1}^d \exp \left \{ - \frac{\lambda^2}{2 \cdot 4^2 \cdot \frac{1}{n^2} \norm{\mX_{:, j}}_2^2} \right \}, \label{eq:Tc} 
\end{align}
where the last inequality follows since $\frac{1}{n} \mX_{:, j}^\top \bm{\epsilon}$ $-\frac{1}{n} \mX_{:, j}^\top \bm{\epsilon}$ are subgaussian with variance proxy $\frac{1}{n^2} \norm{\mX_{:, j}}_2^2$ as $\epsilon_1, \dots, \epsilon_n$ are independent 1-subgaussian random variables. Bounding each summand in \eqref{eq:Tc} by the maximum of summands completes the proof of \eqref{eq:T}.
\end{proof}

\begin{proof}[Proof of \lemref{thm:lassol1}]
The right-hand side of \eqref{eq:oracle} depends on the unknown set $S(\ts)$; however, we can further upper bound the right-hand side of \eqref{eq:oracle} by replacing $\phi^2(\mM, S(\ts))$ by its lower bound $\phi^2(\mM, s)$, which is computable using only $\mX$ and $s$. Therefore, \lemref{thm:lassol1} is a corollary to Lemma~\ref{lem:oracle}.
\end{proof}

\begin{proof}[Proof of \thmref{thm:thres}]
Define the event $\mc{G} \triangleq \left\{ \norm{\tinit - \ts}_1 \leq \frac{4 \lambda_{\mr{init}} s}{\phi^2(\mM, s)} \right\}$. We have
\begin{align}
    \norm{\tinit - \ts}_1 &\geq \norm{(\tinit - \ts)_{S(\ts)^{\mr{c}}}}_1 \\
    &= \sum_{j \in S(\ts)^{\mr{c}}} |(\tinit)_j | \label{eq:tsstep}\\
    &\geq \sum_{j \in S(\tthres) \setminus S(\ts)} |(\tinit)_j | \\
    &\geq | S(\tthres) \setminus S(\ts) | \lambda_{\mr{thres}}, \label{eq:thresstep}
\end{align}
where \eqref{eq:tsstep} follows since $\ts_{S(\ts)^{\mr{c}}} = 0$ by assumption, and \eqref{eq:thresstep} follows from the thresholding step in \eqref{eq:threshlas}.
Therefore, on the event $\mc{G}$, it holds that
\begin{align}
    | S(\tthres) \setminus S(\ts) | \leq \frac{\norm{\tinit - \ts}_1}{\lambda_{\mr{thres}}} \leq \frac{4 \lambda_{\mr{init}} s}{\lambda_{\mr{thres}} \,\phi^2(\mM, s)}. \label{eq:setsize}
\end{align}
\end{proof}

For all $j \in S(\ts)$, on the event $\mc{G}$, we have
\begin{align}
    | (\tinit)_j | &\geq \theta_{\min} - \norm{(\ts - \tinit)_{S(\ts)}}_{\infty} \\
    &\geq \theta_{\min} - \norm{(\ts - \tinit)_{S(\ts)}}_{1} \\
    &\geq \theta_{\min} - \norm{ \ts - \tinit }_{1} \\
    &\geq \theta_{\min} - \frac{4  \lambda_{\mr{init}} s}{\phi^2(\mM, s)}. 
\end{align}

Therefore, if 
\begin{align}
    \lambda_{\mr{thres}} \geq \theta_{\min} - \frac{4  \lambda_{\mr{init}} s}{\phi^2(\mM, s)}, \label{eq:capture}
\end{align}
$S(\tthres) \supseteq S(\ts)$ is satisfied on $\mc{G}$. Combining \lemref{thm:lassol1}, \eqref{eq:setsize}, and \eqref{eq:capture} completes the proof of \thmref{thm:thres}.

\section{Proof of Theorem \ref{thm:ODLinBAI}} \label{app:proofOD}
The proof of \thmref{thm:ODLinBAI} closely follows the proof of \citet[Th.~2]{yang}. Therefore, we only explain the differences, which are as follows.
\begin{enumerate}[label=(\roman*), leftmargin=*]
    \item Due to our construction, $m$ in \citet{yang} is replaced by $\tilde{T} = \left \lfloor \frac{T}{\left \lceil \log_2 d \right \rceil} \right \rceil$. 
    \item Let $\mc{A}_r$ be the active arms in round $r$ and let $\{\va_r(i) \colon i \in \mc{A}_r\} \subset \mathbb{R}^{d_r}$ be the dimensionality-reduced arm vectors. From \citet[Appendix B]{fiez}, it holds that
    \begin{align}
        \max_{i \in \mc{A}_r} \norm{\va_r(i)}_{\mM(\tilde{\pi}^{(r)})^{-1}}^2 \leq d_r \left(1 + \frac{d_r^2}{\tilde{T}}\right),
    \end{align}
    where $\tilde{\pi}^{(r)}$ is the rounded version of the G-optimal design output $\pi^{(r)}$.
    \item In the proof of \citet[Lemma~3]{yang}, the set $\mc{B}_r$ is the set of arms in $\mc{A}_{r-1}$ excluding the best arm and $\lceil \frac{d}{2^{r+1}} \rceil - 1$ suboptimal arms with the largest mean rewards. We re-define $\mc{B}_r$ as the set of arms in $\mc{A}_{r-1}$ excluding the best arm and $\lceil \frac{d}{2^{r}} \rceil - 1$ suboptimal arms with the largest mean rewards.
\end{enumerate}

With the modifications in items (i) and (ii) and following the steps in the proof of \citet[Lemma~2]{yang}, we get for any arm $i \in \mc{A}_{r-1}$ 
\begin{align}
    \Prob{\hat{\mu}_r(1) < \hat{\mu}_r(i) | 1 \in \mc{A}_{r-1}} \leq \exp \left\{ -\frac{\tilde{T} \Delta_i^2}{8 \lceil \frac{d}{2^{r-1}} \rceil (1 + \frac{d^2}{\tilde{T}})} \right \}, \label{eq:probmu}
\end{align}
where $\hat{\mu}_r(i)$ denotes the estimated mean of arm $i$ in round $r$. 

Using item (iii) and \eqref{eq:probmu}, we go through the proof of \citet[Lemma~3]{yang} and get
\begin{align}
    \Prob{1 \notin \mc{A}_r | 1 \in \mc{A}_{r-1}} \leq \begin{cases}
        (K - \frac{d}{2}) \exp \left \{ - \frac{ \tilde{T} \Delta_{\lceil \frac{d}{2^r} \rceil + 1}} {16 ( \lceil \frac{d}{2^r} \rceil + 1) (1 + \frac{d^2}{\tilde{T}})} \right \}, &\text{if } r = 1 \\
        (\frac{d}{2^r} + 1) \exp \left \{ - \frac{ \tilde{T} \Delta_{\lceil \frac{d}{2^r} \rceil + 1}} {16 ( \lceil \frac{d}{2^r} \rceil + 1) (1 + \frac{d^2}{\tilde{T}})} \right \}, &\text{if } r > 1. \label{eq:1notin}
    \end{cases}
\end{align}

Finally, following the steps in the proof of \citet[Th.~2]{yang} with \eqref{eq:1notin}, we get
\begin{align}
    \Prob{\hat{I} \neq 1} &\leq \left(K - \frac{d}{2} + \sum_{r = 2}^{\lceil \log_2 d \rceil} \frac{d}{2^{r}} + \lceil \log_2 d \rceil - 1 \right) \exp \left\{ - \frac{\tilde{T}}{16 \left( 1 + \frac{d^2}{\tilde{T}}\right)  H_{2, \mathrm{lin}}(d)} \right\} \\ 
    &\leq (K + \log_2 d) \exp \left\{ - \frac{\tilde{T}}{16 \left( 1 + \frac{d^2}{\tilde{T}}\right)  H_{2, \mathrm{lin}}(d)} \right\}, 
\end{align}
which completes the proof.

\section{Proof of \corref{cor:main}} \label{app:maincor}
We set $\kappa$, $\lambda_{\mr{init}}$ and $\lambda_{\mr{thres}}$ as
\begin{align}
\kappa &= \frac{b^2}{\theta_{\min}^2} \frac{25}{24}, \label{eq:kappa} \\
\lambda_{\mr{init}} &= \frac{1}{\sqrt{\kappa (s + s^2)}} ,\quad\mbox{and}\\
\lambda_{\mr{thres}} &= \frac{b}{s} \lambda_{\mr{init}}.\label{eq:s1choice}
\end{align}
Note that \eqref{eq:s1choice} sets $s_1 = s + s^2$ in \eqref{eq:s1}, and for any $s \in \mathbb{N}$, we check that the condition in \thmref{thm:main} holds:
\begin{align}
    \theta_{\min} = \frac{b}{\sqrt{\kappa}} \sqrt{\frac{25}{24}} \geq \frac{b}{\sqrt{\kappa}} \frac{s + \frac{1}{s}}{\sqrt{s + s^2}} = \lambda_{\mr{init}}(c + b s).
\end{align}

Next, we would like to set $c_0 = \frac{T_1}{T_2}$ so that the two exponents in \eqref{eq:maineq} are equal. However, since $H_{2, \mr{lin}}(s + s^2)$ is not available to us, we use the lower bound
\begin{align}
    H_{2, \mr{lin}}(s + s^2) = \max_{i \in \{2, \dots, s + s^2\}} \frac{i}{\Delta_i^2} \geq \frac{s + s^2}{\Delta_{s + s^2}^2} \geq  \frac{s + s^2}{4},  \label{eq:H2lower}
\end{align}
where the last inequality follows from the assumption $|\mu_k| \leq 1$ for all $k \in [K]$.\footnote{This step is the only place where the assumption on the mean rewards is used.} One can further upper bound $\Delta_{s + s^2}$ by using the values of $K$ arm vectors and searching for $\ts$ that gives the largest $\Delta_{s + s^2}$.

We set the ratio $c_0 = \frac{T_1}{T_2}$ as
\begin{align}
    c_0 = \frac{25 b^2 x_{\max}^2}{3 \theta_{\min}^2 \log_2(s + s^2)}, \label{eq:c0app}
\end{align}
which together with \eqref{eq:kappa}--\eqref{eq:H2lower} ensures that 
\begin{align}
     \exp\left\{ - \frac{ \left \lfloor \frac{T_2}{\log_2(s_1)} \right \rfloor}{16 \left(1 + \epsilon \right) H_{2, \mr{lin}} (s_1) } \right\} \geq \exp \left\{ - \frac{T_1 \lambda_{\mr{init}}^2}{32 x_{\max}^2}  \right\}. \label{eq:expineq}
\end{align}
Combining \eqref{eq:expineq} with $s_1 = s + s^2$ completes the proof.

The ratio $c_0$ decreasing with $s$ as in \eqref{eq:c0app} is consistent since as $s$ approaches $d$, the sparse linear bandit approaches the standard linear bandit, and we would expect to spend more budget on phase 2 than phase 1 for large $s$. We deliberately choose the parameters in \eqref{eq:kappa}--\eqref{eq:s1choice} to maintain this property.

\section{Implementation Details} \label{app:CV}
In all computations of the Lasso problem \eqref{eq:lassoinit}, we use the ADMM algorithm \citep{boyd2011}.
\subsection{Lasso-OD with $K$-fold Cross-Validation}
In the implementation of Lasso-OD-CV, the ratio of the budgets, $\frac{T_1}{T_2}$, is set to the default value $\frac{1}{4}$, i.e., naturally, the algorithm spends more budget for the BAI algorithm than for the support estimation. 

For tuning the hyperparameters $\lambda_{\mr{init}}$ and $\lambda_{\mr{thres}}$, we pull $T_1$ arms according to the allocation given in \eqref{eq:vtilde}. We use the following cross-validation steps to tune the parameters. 
\begin{enumerate}[label=(\roman*), leftmargin=*]
\item Fix two sets of  hyperparameters  $\{\lambda_{\mr{init}, 1}, \lambda_{\mr{init}, 2}, \dots, \lambda_{\mr{init}, m}\}$ and $\{\lambda_{\mr{thres}, 1}, \lambda_{\mr{thres}, 2}, \dots, \lambda_{\mr{thres}, m}\}$ that are candidates for $\lambda_{\mr{init}}$ and $\lambda_{\mr{thres}}$ respectively. 
\item Iteratively tune the parameters by fixing one of them and searching for the best parameter for the other one.
\item In each cross-validation round, the objective is to minimize the loss function
\begin{align}
    L = \frac{1}{T_1} \mathbb{E}\left[\norm{\rvy - \mX \tthres}_2^2\right] + c_1 \Prob{\norm{\tthres}_0 < s} + c_2 \mathbb{E}\left[1\left\{\norm{\tthres}_0 > s \right\} \norm{\tthres}_0\right], \label{eq:loss}
\end{align}
where $c_1$ and $c_2$ are $\ell_0$-norm regularization parameters. Here, the first term in \eqref{eq:loss} is the mean-squared error; the second and the third terms penalize the $\ell_0$-norm error and force the hyperparameters to output an estimate with $s$ variables. In application, we set $c_1 = 200$ and $c_2 = 5$, giving more importance to detecting at least $s$ variables. If the value of $s$ is not available, we can still use this technique by setting $c_1 = c_2 = 0$. As standard, we approximate \eqref{eq:loss} by training the parameters in $K-1$ blocks and testing in the remaining block. 
\item To fasten the convergence, in each round of cross-validation, we exponentially narrow down the candidate sets. 
\item To reduce the variance in cross-validation, we employ Monte-Carlo simulations, i.e., we independently divide the data into $K$ blocks for multiple times and then take the average loss. 
\end{enumerate}

\subsection{Lasso-OD-Analytical}
In the implementation of Lasso-OD-Analytical, we set the hyperparameters $\lambda_{\mr{init}}$ and $\lambda_{\mr{thres}}$ as in \eqref{eq:kappa}--\eqref{eq:s1choice}, and $c_0$ is set so that \eqref{eq:c0app} holds with equality. This setup effectively uses the hardness parameter $H_{2, \mr{lin}}(s + s^2)$. To compute these hyperparameters, we first need to compute $\phi^2(\mM, s)$. As discussed in Appendix~\ref{app:design}, this requires solving a MIDCP. We do this by using the YALMIP toolbox \citep{yalmip} because it does not ask to convert the problem into another one. Alternatively, one can use the CVX toolbox \citep{boyd} with a little bit more effort, or use \lemref{lem:sigma} and solve an easier problem at the expense of some performance loss.    

\subsection{OD-LinBAI and Other BAI Algorithms}
We implement OD-LinBAI and other BAI algorithms shown in \secref{sec:experiments} using the methods described in \citet[Appendix~E]{yang}.

\paragraph{BayesGap-Adaptive:} In general, BayesGap algorithm \citep{hoffman} requires the knowledge of the hardness parameter. As in \citet{hoffman, yang}, at the beginning of each time instant, we input the estimated hardness parameter according to the three-sigma rule. In the experiments, we omit the oracle version of BayesGap that directly uses the knowledge of the hardness parameter. 
\section{Additional Experiments}\label{app:experiments}
\subsection{Variants of Our Algorithm}
We present two variants of Lasso-OD that modify the operations in phase 2 and one variant that replaces the thresholded Lasso in phase 1. 

\paragraph{Lasso-$\mathcal{XY}$-Allocation:} This algorithm is identical to Lasso-OD except that the $G$-optimal design used to determine the allocations within each round is replaced by the $\mathcal{X}\mathcal{Y}$-allocation from \cite{soare}. Let $\mathcal{X} = \{a(i) \colon i \in \mathcal{A}\}$ be the set of arms in an active set $\mc{A}$. Let $\mc{Y} = \{\vx - \vx' \colon \vx, \vx' \in \mathcal{X}, \vx \neq \vx' \}$ be the set of arm differences. The $\mc{X}\mc{Y}$-allocation solves the problem
\begin{align}
    \pi^* = \argmin_{\pi \in \mc{P}(\mc{A})} \max_{\vy \in \mc{Y}} \norm{\vy}_{\ms{M(\pi)}^{-1}}. \label{eq:XY}
\end{align}
Lasso-$\mc{X} \mc{Y}$-allocation replaces Line 11 of Algorithm~\ref{alg:ODLINBAI} with~\eqref{eq:XY}. In the experiments, we compute~\eqref{eq:XY} using the Frank--Wolfe algorithm as in \cite{fiez}. 

From \citet[Proof of Lemma~2]{yang}, the probability that a sub-optimal arm $i$ has a smaller estimated mean than the optimal arm 1 is bounded as
\begin{align}
    \Prob{\hat{\mu}(1) < \hat{\mu}(i)} &\leq \exp \left \{ - \frac{\Delta_i^2}{2 \norm{\va(1) - \va(i)}_{\mM(\pi)^{-1}}} \right\} \\
    & \leq \exp \left \{ - \frac{\Delta_i^2}{2 \max_{i\neq j} \norm{\va(j) - \va(i)}_{\mM(\pi)^{-1}}} \right\} \label{eq:XYopt} \\
    & \leq \exp \left \{ - \frac{\Delta_i^2}{8 \max_{i \in \mathcal{A}} \norm{\va(i)}_{\mM(\pi)^{-1}}} \right\}, \label{eq:Goptimalopt}
\end{align}
where $\pi$ is the allocation within the round. Here, \eqref{eq:Goptimalopt} follows from the triangle inequality. The G-optimal design optimizes the allocation $\pi$ in \eqref{eq:Goptimalopt}, and $\mc{X}\mc{Y}$-allocation optimizes \eqref{eq:XYopt}. Since $\mc{X}\mc{Y}$-allocation optimizes a tighter bound, Lasso-$\mc{X}\mc{Y}$-allocation is expected to perform better than Lasso-OD. 

\paragraph{Lasso-BayesGap:} Since BayesGap performs better than OD-LinBAI in the examples in \secref{sec:experiments}, we propose the variant Lasso-BayesGap where in phase 2, OD-LinBAI is replaced by BayesGap-Adaptive from \cite{hoffman}. 

In our implementations of Lasso-$\mathcal{XY}$-Allocation and  Lasso-BayesGap (as described above), the parameters of Lasso are tuned via  cross-validation. 

\paragraph{PopArt-OD:} Recently, \cite{jang2022} develop the PopArt algorithm, which estimates the unknown parameter $\ts$ similarly to Lasso and TL. Due to its superior $\ell_1$ error to Lasso, PopArt also guarantees that the  support of $\ts$ is estimated efficiently. Similar to TL, the PopArt estimate is obtained by thresholding an initial estimate. Our variant, PopArt-OD, replaces the TL in phase~1 of Lasso-OD with PopArt, and retains phase~2 as is. We give its pseudo-code in  Algorithm~\ref{alg:popart} and analyze its performance in the section below. Line 1 of PopArt involves an optimization problem that yields the optimal covariance matrix with respect to an upper bound on the error probability; this process reflects the design matrix optimization of TL described in Appendix~\ref{app:design}. In PopArt, if the population covariance $\mM$ in Line 3 was replaced with the empirical covariance and if the Catoni estimator in Line 4 was replaced with averaging, the resulting algorithm would be the thresholded OLS estimator. 

PopArt has one hyperparameter (the estimate on the error probability  $\delta$ in \citet{jang2022}). In the experiments below, we tune the hyperparameter using a $K$-fold cross validation procedure. 

\begin{algorithm}[!htbp] 
   \caption{PopArt-OD}
   \label{alg:popart}
\begin{algorithmic}[1]
\INPUT Time budget $T$, arm vectors $\va(1), \dots, \va(K) \in \mathbb{R}^d$, $\theta_{\min}$, and sparsity $s$.
\STATE Solve the convex optimization problem $\nu^* = \argmin \limits_{\nu \in \mc{P}[K]} \max \limits_{i \in [d]} \left( \{\sum_{i = 1}^K \nu_i \va(i) \va(i)^\top\}^{-1} \right)_{ii}$. Let the objective value of the minimum be $H_2^*$ and $\mM = \sum_{i = 1}^K \nu_i^* \va(i) \va(i)^\top$. Set
\begin{align}
    \lambda_{\mr{PA}} &= \min \left\{ \sqrt{2 H^2_*}, \frac{\theta_{\min}}{2} \right\}, \quad 
    c_{\mr{PA}} = \frac{2 H^2_*} {\lambda_{\mathrm{PA}}^2 s \log_2 s}, \quad
    T_1 = \left\lceil T \frac{c_\mr{PA}}{1 + c_{\mr{PA}}} \right\rceil, \quad T_2 = T - T_1, \quad g = \frac{\lambda_{\mathrm{PA}}^2}{8 H^2_*}. \label{eq:cpop}
\end{align}
\STATE Sample $T_1$ arms, $A_1, \dots, A_{T_1}$ i.i.d. with $\nu^*$, and observe rewards $\ry_1, \dots, \ry_{T_1}$.
\STATE
For $t = 1, \dots, T_1$, let
$\tilde{\bm{\theta}}_t = \mM^{-1} \va(A_t) \ry_t \in \mathbb{R}^d$.
\STATE Set for $i \in [d]$, $\theta'_i = \mathrm{Catoni}\left((\tilde{\bm{\theta}}_{1, i}, \dots, \tilde{\bm{\theta}}_{T_1, i}), \sqrt{\frac{g  }{(\mM^{-1})_{ii} \left(1 + \frac{2 g}{1 - 2 g} \right) }}\right)$, where 
$\mathrm{Catoni}((Z_1, \dots, Z_n), \alpha)$ is the unique solution $y$ to the equation
\begin{align}
    \sum_{i = 1}^n \psi(\alpha(Z_i - y)) = 0, \quad 
    \psi(x) = \mathrm{sign}(x)(1 + |x| + x^2/2).
\end{align}
\STATE $\hat{\bm{\theta}}_{\mr{PA}} = \Big( \theta'_i 1\big\{ |\theta'_i| \geq \sqrt{8 \mM^{-1}_{ii} g} \big\} \colon i \in [d]   \Big)$ and $\hat{\mathcal{S}}_{\mr{PA}} = S(\hat{\bm{\theta}}_{\mr{PA}})$. 
\STATE Run OD-LinBAI (Algorithm~\ref{alg:ODLINBAI}) restricted to the support $\hat{\mathcal{S}}_{\mr{PA}}$ using $T_2$ pulls.
\OUTPUT $\hat{I}$ is the only remaining arm as the output of Algorithm~\ref{alg:ODLINBAI}. 
\end{algorithmic}
\end{algorithm}

\subsection{Analysis of PopArt-OD}
The following theorem bounds the error probability of PopArt-OD.
\begin{theorem}\label{thm:popart}
    For any linear bandit instance, the error probability of PopArt-OD given in Algorithm~\ref{alg:popart} is bounded as
    \begin{align}
        \Prob{\hat{I} \neq 1} \leq (K + \log_2 d + 2d) \exp \left \{ - \frac{T}{16 \lfloor \log_2(s) \rfloor (1 + \epsilon_{\mr{POP}}) H_{2, \mathrm{lin}}(s) (1 +  c_{\mr{PA}})} \right \}, 
    \end{align}
    where $c_{\mr{PA}}$ is defined in \eqref{eq:cpop}, below, and $\epsilon = \frac{(1 + c_{\mr{POP}}) s^2}{T}$.
   
\end{theorem}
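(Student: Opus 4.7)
The plan is to mirror the two-phase template from the proof of Theorem \ref{thm:main}, with the crucial difference that PopArt delivers \emph{exact} support recovery instead of merely a superset of inflated size $s + s^2$. Define the event
\begin{align}
\mc{E}_{\mr{PA}} \triangleq \{\hat{\mc{S}}_{\mr{PA}} = S(\ts)\},
\end{align}
and apply the law of total probability to obtain
\begin{align}
\Prob{\hat{I} \neq 1} \leq \Prob{\hat{I} \neq 1 \mid \mc{E}_{\mr{PA}}} + \Prob{\mc{E}_{\mr{PA}}^{\mr{c}}}.
\end{align}
The fact that the recovered support has cardinality exactly $s$ (rather than at most $s + s^2$) is the structural reason why $H_{2, \mr{lin}}(s)$ appears in the final exponent instead of $H_{2, \mr{lin}}(s + s^2)$.

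For the conditional term, I would use that on $\mc{E}_{\mr{PA}}$ the projected arm vectors and parameter live in $\mathbb{R}^s$, and the mean rewards $\mu_k$ are preserved because every eliminated coordinate satisfies $\theta^*_j = 0$. Invoking Theorem \ref{thm:ODLinBAI} with effective dimension $s$ and budget $T_2 = T - T_1$ then yields
\begin{align}
\Prob{\hat{I} \neq 1 \mid \mc{E}_{\mr{PA}}} \leq (K + \log_2 s)\exp\left\{-\frac{\lfloor T_2/\lceil \log_2 s\rceil \rfloor}{16(1 + \epsilon_{\mr{POP}})\, H_{2,\mr{lin}}(s)}\right\},
\end{align}
where $\epsilon_{\mr{POP}}$ absorbs the G-optimal-design rounding factor of order $s^2/T_2$.

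For the unconditional term, I would invoke the PopArt concentration guarantee of \citet{jang2022}. For each coordinate $i \in [d]$, the i.i.d.\ samples $\tilde{\bm{\theta}}_{t,i} = (\mM^{-1} \va(A_t) \ry_t)_i$ have mean $\theta^*_i$ and variance bounded in terms of $(\mM^{-1})_{ii}$ using $|\mu_k| \leq 1$ and 1-subgaussianity of $\epsilon_t$. The Catoni estimator with the parameter $\alpha$ prescribed in Algorithm \ref{alg:popart} then satisfies $\Prob{|\theta'_i - \theta^*_i| > \sqrt{8 (\mM^{-1})_{ii}\, g}} \leq 2 \exp(-g T_1)$, so a union bound over $i \in [d]$ gives
\begin{align}
\Prob{\max_{i \in [d]} |\theta'_i - \theta^*_i| > \sqrt{8 (\mM^{-1})_{ii}\, g}} \leq 2d \exp(-g T_1).
\end{align}
The choice $\lambda_{\mr{PA}} = \min\{\sqrt{2 H_*^2},\, \theta_{\min}/2\}$ in \eqref{eq:cpop} is precisely what ensures $\theta_{\min} \geq 2 \sqrt{8 (\mM^{-1})_{ii}\, g}$ for every $i \in S(\ts)$, so on that high-probability event the thresholding rule (Line 5 of Algorithm \ref{alg:popart}) discards every zero coordinate and retains every nonzero one, i.e., the event is contained in $\mc{E}_{\mr{PA}}$. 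Hence $\Prob{\mc{E}_{\mr{PA}}^{\mr{c}}} \leq 2d\exp(-g T_1)$.

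Finally, I would balance the two exponents via the split $T_1 = \lceil T c_{\mr{PA}}/(1 + c_{\mr{PA}})\rceil$ and $T_2 = T - T_1$ from \eqref{eq:cpop}. Using the lower bound $H_{2,\mr{lin}}(s) \geq s/4$ that follows from $|\mu_k| \leq 1$ (as in the proof of Corollary \ref{cor:main}) together with $g = \lambda_{\mr{PA}}^2/(8 H_*^2)$, the definition of $c_{\mr{PA}}$ equates $g T_1$ with $T_2/(16 \log_2 s \cdot H_{2,\mr{lin}}(s))$ up to constants; rewriting both exponents under the common denominator factor $(1 + c_{\mr{PA}})$ and adding the prefactors $K + \log_2 s$ and $2d$ produces the stated bound. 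The principal technical obstacle is the Catoni concentration step: one must carefully bound $\Var(\tilde{\bm{\theta}}_{t,i})$ in terms of $(\mM^{-1})_{ii}$ (this yields the $(1 + 2g/(1 - 2g))$ correction in the choice of $\alpha$) and apply the standard Catoni inequality. This reduces essentially to citing and specializing \citet[Theorem~2]{jang2022}, after which the remaining algebra is routine.
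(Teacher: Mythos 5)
Your proposal is correct and follows essentially the same route as the paper's proof: the same decomposition over the exact-support-recovery event $\{\hat{\mc{S}}_{\mr{PA}} = S(\ts)\}$, the same invocation of Theorem~\ref{thm:ODLinBAI} with effective dimension $s$, the same use of the PopArt/Catoni concentration guarantee from \citet{jang2022} (which the paper simply cites rather than re-derives), and the same balancing of the two exponents via $c_{\mr{PA}}$ and the bound $H_{2,\mr{lin}}(s) \geq s/4$.
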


\begin{proof}
    By Line 5 of PopArt, if $i \in [d]$ satisfies that $\theta_i' \geq \lambda_{\mr{PA}}$, then $i \in \hat{S}_{\mr{PA}}$. From \citet[Prop.~1 and Th.~1]{jang2022}, with probability at least $1 - 2d \exp \left\{ - \frac{T_1 \lambda_{\mr{PA}}^2}{8 H_*^2} \right \} $,
    the initial PopArt estimator $\bm{\theta}'$ satisfies
    \begin{align} 
    \norm{\bm{\theta}' - \ts}_{\infty} \leq \lambda_{\mr{PA}}.
    \end{align}
    and the PopArt estimator satisfies $S(\hat{\bm{\theta}}_{\mr{PA}}) \subseteq S(\ts)$. By selecting $\lambda_{\mr{PA}} \leq \frac{\theta_{\min}}{2}$, we further ensure that $S(\ts) \subseteq S(\hat{\bm{\theta}}_{\mr{PA}})$, giving $S(\hat{\bm{\theta}}_{\mr{PA}}) = S(\ts)$. Therefore, by the union bound and \thmref{thm:ODLinBAI}, the error probability of PopArt-OD is bounded as
    \begin{align}
        \Prob{\hat{I} \neq 1} &\leq \Prob{S(\hat{\bm{\theta}}_{\mr{PA}}) \neq S(\ts)} + \Prob{\hat{I} \neq 1 \middle| S(\hat{\bm{\theta}}_{\mr{PA}}) = S(\ts)} \\
        &\leq 2d \exp \left\{ - \frac{T_1 \lambda_{\mr{PA}}^2}{8 H_*^2} \right \} + (K + \log_2 d) \exp \left\{ - \frac{T_2}{16 \lceil \log_2 s \rceil H_{2, \mr{lin}}(s) (1 + \epsilon_{\mr{PA}})}\right \}. \label{eq:popeq}
    \end{align}
    The rest of the proof follows steps similar to the proof of \thmref{thm:main}, which aims to balance the two exponents in \eqref{eq:popeq}. 
    The choices of $c_{\mr{PA}}, T_1$, and $T_2$ together with the lower bound $H_{2, \mr{lin}}(s) \geq \frac{s}{4}$ (see, \eqref{eq:H2lower}) imply that 
    \begin{align}
        \frac{T_1 \lambda_{\mr{PA}}^2}{8 H_*^2} \geq \frac{T_2}{16 \lceil \log_2 s \rceil H_{2, \mr{lin}}(s) (1 + \epsilon_{\mr{PA}})},
    \end{align}
    which completes the proof. Note that we select $\lambda_{\mr{PA}} \leq \sqrt{2 H_*^2}$ to ensure that $1 - 2g \geq \frac{1}{2} > 0$, making the Catoni parameter in Line~4 of Algorithm~\ref{alg:popart} valid. 
\end{proof}

Theorem~\ref{thm:popart} shows that PopArt-OD has an error probability that scales as
\begin{align}
    \exp\left\{ -\Omega\left( \frac{T}{(\log_2 s) H_{2, \mr{lin}}(s)}\right) \right \}
\end{align}
as $T$ and $s$ grow. This is the same theoretical result as that for Lasso-OD. However, we observe from the next section (specifically Appendix~\ref{sec:first_ex}) that Lasso-OD outperforms PopArt-OD empirically.

\subsection{Experiments}
In the experiments below, we include Lasso-$\mc{X}\mc{Y}$-allocation and Lasso-BayesGap to the list of algorithms in Section~\ref{sec:experiments}.

\subsubsection{First Example} \label{sec:first_ex}
In the first example, we test the performance of the various BAI algorithms for sparsities of at least 2. We generate $K$ $d$-dimensional arm vectors $a(k) = (a(k)_i\colon i \in [d])$, $k \in [K]$, where $a(k)_i$'s are distributed $\mc{N}(0, \frac{1}{s})$ independent across arms $k \in [K]$ and coordinates $i \in [d]$. The $s$-sparse unknown vector $\ts$ is set as $(\theta^*)_i = \frac{1}{\sqrt{s}}$ for $i \in [s]$ and $(\theta^*)_i = 0$ for $i = s+1, \dots, d$. Figure~\ref{fig:plots_s} compares the performances of several algorithms in the literature and variants of our algorithm for $s \in \{2, 3, 4\}$, $K = 50$, $d \in \{10, 20\}$, and $T \in [200, 10^4]$. 
\begin{figure}[t]
\includegraphics[width=1\textwidth]{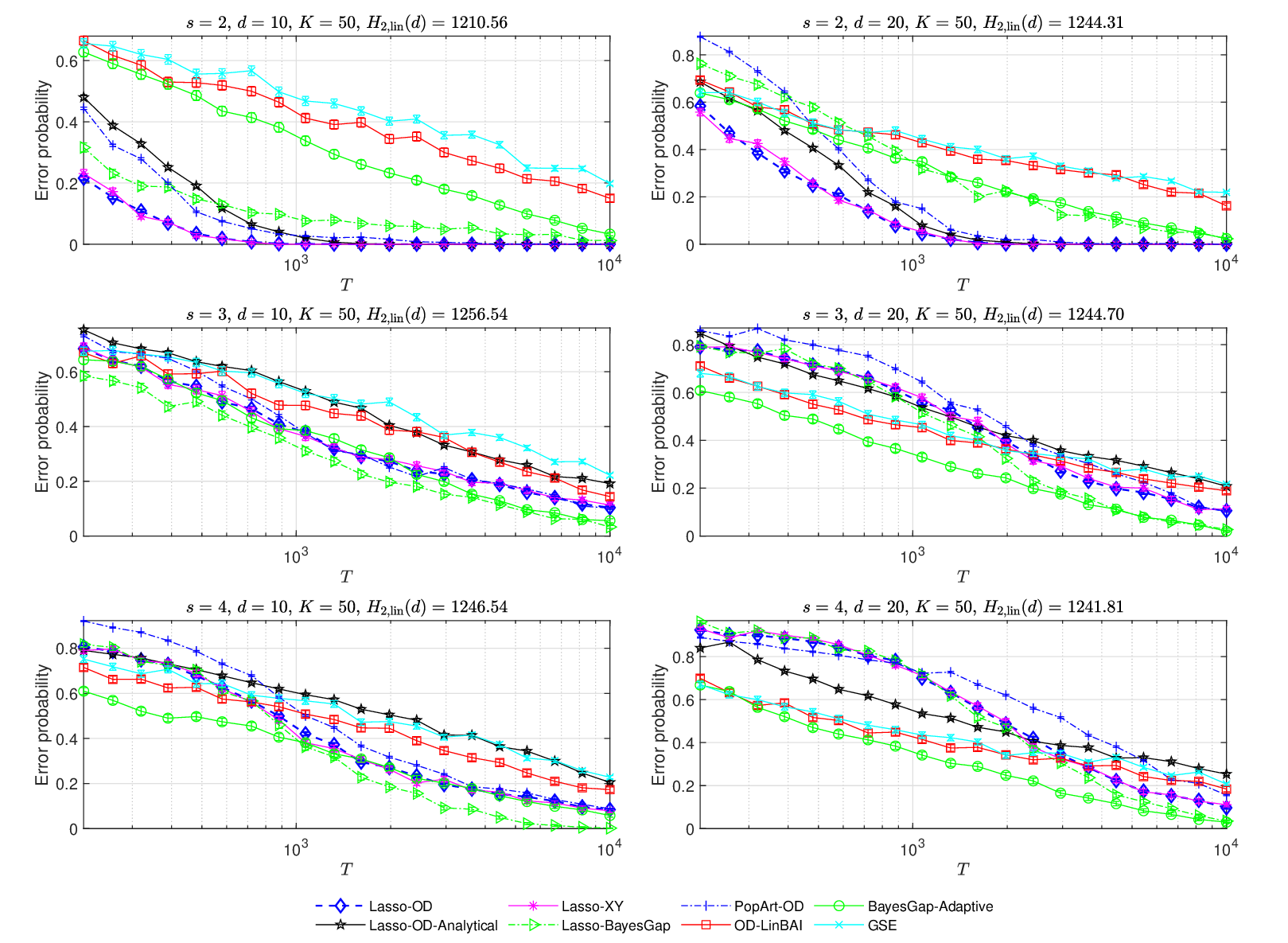}
\centering
\caption{Comparison of several algorithms with $s \in \{2, 3, 4\}$.} \vspace{-.1in}
\label{fig:plots_s}
\end{figure}
For all bandit instances under consideration, the performances of Lasso-OD and Lasso-$\mc{X}\mc{Y}$-allocation are almost identical. However, due to its low computational complexity (see tables below for the CPU runtimes), Lasso-OD is preferred over Lasso-$\mc{X}\mc{Y}$-allocation. 
Among different variants of Lasso-OD, Lasso-OD and Lasso-$\mc{X}\mc{Y}$-allocation have the best performance for the instances  with $s = 2$. For $s = 3$ and $s = 4$, among the algorithms shown, Lasso-BayesGap performs the best for a large enough budget $T$. The poor performance of Lasso-based algorithms for small budgets is because at larger $s$, the minimum budget that should be allocated to phase 1 to reliably estimate the support of $\ts$ increases with $s$. For the instances with $s \in \{3, 4\}$, BayesGap-Adaptive performs remarkably well, but it is outperformed by Lasso-based algorithms for $s = 2$. PopArt-OD algorithm is outperformed by Lasso-OD for all instances shown, which implies that the variable selection property of PopArt is poorer than that of the thresholded Lasso. 

In Tables~\ref{tab:CPU1} and~\ref{tab:CPU2},\footnote{Pre-calculation in  Tables~\ref{tab:CPU1} and~\ref{tab:CPU2}  refers to the calculation of $\phi^2(\mM, s)$ that is used to determine the hyperparameters for Lasso-OD-Analytical.} we report the average CPU runtimes for the instances in the first example with $s = 2$ and $s = 3$. Lasso-OD is superior to all other algorithms in terms of the computational complexity.\footnote{All experiments are implemented on MATLAB 2023a on an Intel(R) Core(TM) i9-12900H processor.}

\subsubsection{Second Example}
In the second example, we assume that $\ts$ belongs to the finite set $\mc{H} \triangleq \{\theta \in \mathbb{R}^d \colon \norm{\theta}_0 = s, \theta_i \in \{-\frac{1}{\sqrt{s}}, 0, \frac{1}{\sqrt{s}} \text{ for } i \in [d] \}\}$. In other words, the non-zero coordinates of $\ts$ are assumed to have magnitude $\frac{1}{\sqrt{s}}$. We generate $K$ $d$-dimensional arm vectors in the vicinity of $\mc{H}$ as follows: $a(k)_i = R_{k, i} \cos(\pi/4 + Z_{k, i})$, where $R_{k, i}$'s are independently and identically distributed (i.i.d.) generated with distribution $\mathrm{Unif}(\{-1, 1\})$, $Z_{k, i}$'s are i.i.d.\ generated with distribution $\mc{N}(0, 0.01)$, and $R_{k, i}$'s and $Z_{k, i}$'s are independent. For this bandit instance, given the arm vectors and using the assumption that $\ts \in \mc{H}$, we can lower bound the hardness parameter $H_{2, \mr{lin}}(s + s^2)$ by computing the minimum hardness parameter for the vectors $\theta \in \mc{H}$. In Figure~\ref{fig:plots_LB}, Lasso-OD-An.-LB computes the hyperparameters analytically and obtains $T_1$ from the lower bound on $H_{2, \mr{lin}}(s + s^2)$ above instead of the true value of $H_{2, \mr{lin}}(s + s^2)$. Figure~\ref{fig:plots_LB} shows that Lasso-OD-An.-LB outperforms all other algorithms in the literature and achieves similar performance as Lasso-OD and Lasso-$\mc{XY}$-allocation for a large enough time budget. 
\begin{figure}[t]
\includegraphics[width=1\textwidth]{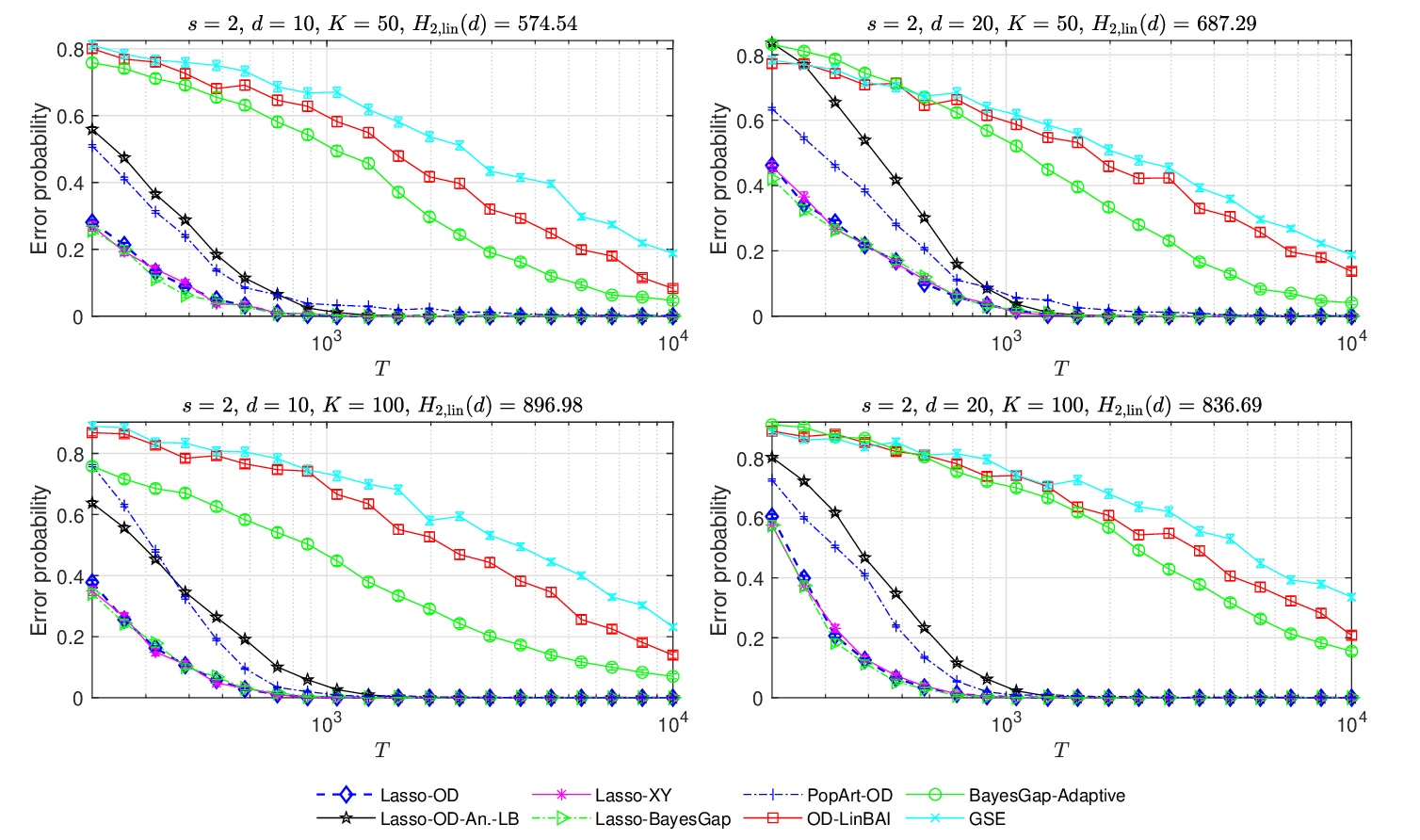}
\centering
\caption{Comparison of several algorithms with $\ts$ belonging to a finite set.} 
\label{fig:plots_LB}
\end{figure}

\subsubsection{Third Example}
In the third example, we extend the example in \citet{yang, jedra, fiez} to sparse linear bandits. We set $\ts = (\frac{1}{\sqrt{2}}, \frac{1}{\sqrt{2}}, 0, \dots, 0)$, i.e., $s = 2$, and $\mc{S} = S(\ts) = \{1, 2\}$. For the coordinates in $\mc{S}$, we pull arms as in \cite{yang}; we set $a(1)_{\mc{S}} = (\cos(\pi /4), \sin(\pi/4))$, $a(K)_{\mc{S}} = (\cos(5 \pi / 4), \sin(5 \pi / 4))$, and $a(i)_{\mc{S}} = (\cos(\pi/ 2 + \phi_i), \sin(\pi/2 + \phi_i))$ for $i = 2, \dots, K-1$, where $\phi_i$ are independently drawn from $\mc{N}(0, 0.09)$. For any $i \in [K]$, we draw $a(i)_{\mc{S}^{\mr{c}}}$ independently from the uniform distribution on the $(d-s)$-dimensional centered sphere of radius $\sqrt{\frac{d-s}{s}}$. Recall that since $\ts_{\mc{S}^{\mr{c}}} = 0$, the values of arms on the coordinates $\mc{S}^{\mr{c}}$ have no effect on the best arm or the value of the hardness parameter. The problem would be identical to that in \citet{yang} if the agent knew the support $\mc{S}$. In this bandit instance, arm 1 is the best arm and there are $K-2$ arms whose mean values are close to that of the second best arm. In the non-sparse case, i.e., $d = s = 2$, \citet{yang} demonstrate that OD-LinBAI outperforms the other algorithms. Figure~\ref{fig:plots_cos} compares the performance of variants of our algorithm with the other algorithms in the literature. We report the empirical performances for $d \in \{10, 20\}, K \in \{50, 100\}$, and $T \in [200, 10^4]$. Among the algorithms shown, Lasso-OD and its variants Lasso-$\mc{XY}$-allocation and PopArt-OD significantly outperform the other algorithms. Unlike the previous two examples, for this example, Lasso-BayesGap is not the best performing algorithm. 

\begin{figure}[!htbp]
\includegraphics[width=1\textwidth]{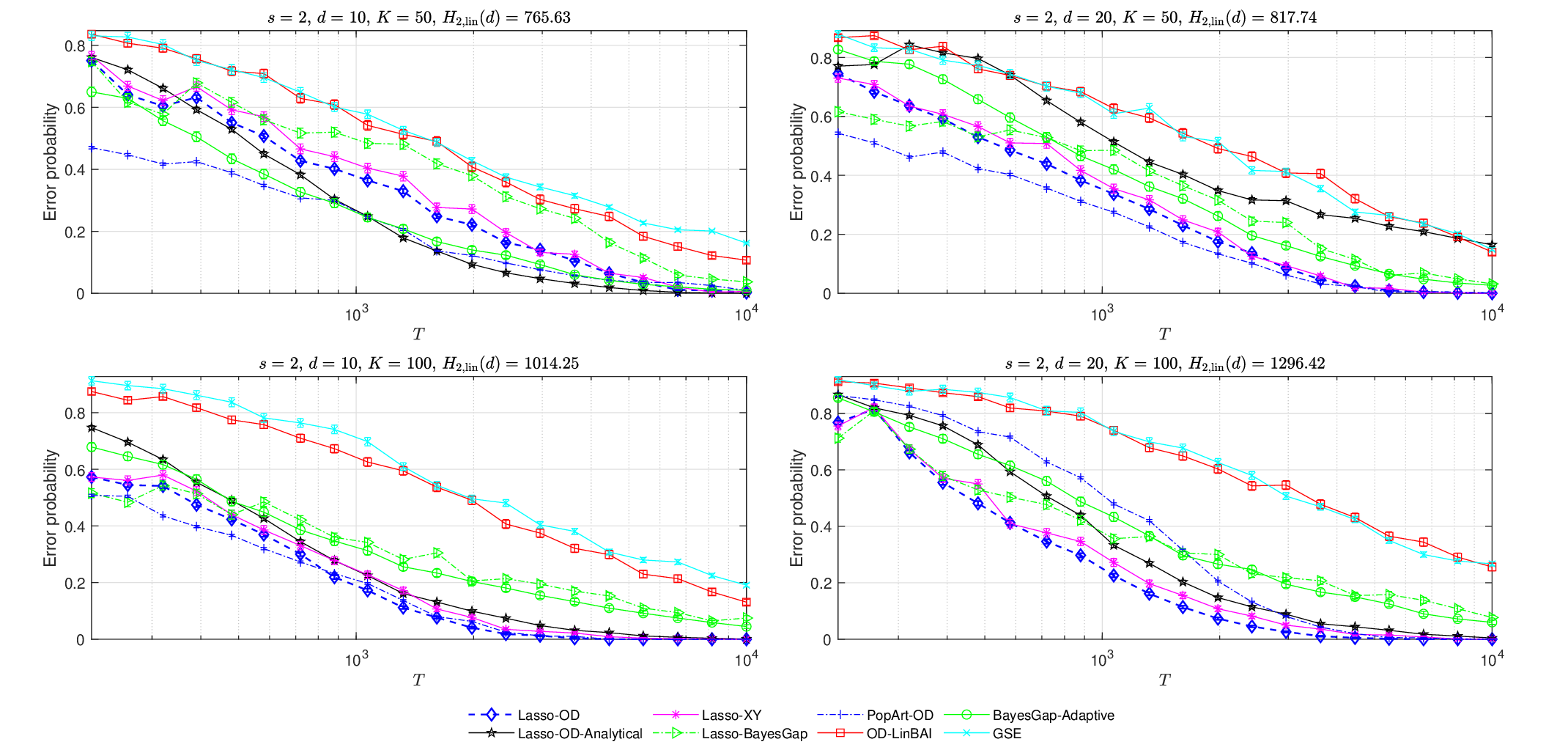}
\centering
\caption{Comparison of several algorithms for the example bandit instance in \cite{yang}.} 
\label{fig:plots_cos}
\end{figure}

\begin{table}
  \caption{The empirical means of the CPU runtimes for $s = 2$, $d = 10$, $K = 50$.}
  \label{tab:CPU1}
  \vskip 0.15in
  \centering
  \begin{tabular}{rrrrrrrrrr} 
    \toprule
    &  \multicolumn{9}{c}{CPU runtimes (milliseconds) }                   \\
    \cmidrule(r){2-10}
    $T$ & \scriptsize Pre-calc. & \scriptsize
Lasso Tuning & \scriptsize
Lasso-OD & \scriptsize
Lasso-$\mc{XY}$ &\scriptsize
Lasso-BayesG. & \scriptsize
Lasso-OD-An. & \scriptsize
BayesGap-Ad. & \scriptsize OD-LinBAI &\scriptsize GSE
\\
\midrule
100  & 9680            & 3300         & 0.98     & 9.0                   & 1.6         & 1.7       & 3.2      & 2.2       & 4.0    \\
200  & 9680            & 3500         & 0.61     & 5.2                 & 2.9         & 1.2       & 5.1      & 2.2       & 4.0    \\
400  & 9680            & 3800         & 0.55     & 3.6                 & 5.5         & 0.83      & 9        & 2.1       & 4.0    \\
800  & 9680            & 4380         & 0.48     & 3.4                 & 11          & 0.76      & 17       & 2.1       & 4.0    \\
1600 & 9680            & 6570         & 0.66     & 4.1                 & 23          & 0.68      & 34       & 2.1       & 4.0    \\
3200 & 9680            & 7520         & 0.66     & 4.0                   & 46          & 0.70       & 69       & 2.1       & 4.0    \\
6400 & 9680            & 7710         & 0.70      & 4.1                 & 89          & 1.1       & 139      & 2.1       & 4.0    \\
\bottomrule
  \end{tabular}
  \vskip -0.1in
\end{table}

\begin{table}
  \caption{The empirical means of the CPU runtimes for $s = 3$, $d = 10$, $K = 50$.}
  \label{tab:CPU2}
  \vskip 0.15in
  \centering
  \begin{tabular}{rrrrrrrrrr} 
    \toprule
    &  \multicolumn{9}{c}{CPU runtimes (milliseconds)}                   \\
    \cmidrule(r){2-10}
    $T$ & \scriptsize Pre-calc. & \scriptsize
Lasso Tuning & \scriptsize
Lasso-OD & \scriptsize
Lasso-$\mc{XY}$ &\scriptsize
Lasso-BayesG. & \scriptsize
Lasso-OD-An. & \scriptsize
BayesGap-Ad. & \scriptsize OD-LinBAI &\scriptsize GSE
\\
\midrule
100  & 27100           & 4200        & 1.1      & 12                  & 1.6         & 2.2       & 2.5      & 2.4       & 4.2  \\
200  & 27100           & 4500         & 0.96     & 8.9                 & 2.9         & 2.1       & 5.2      & 2.4       & 4.1  \\
400  & 27100           & 4900         & 0.94     & 8.1                 & 5.7         & 2.3       & 10       & 2.4       & 4.1  \\
800  & 27100           & 5000         & 0.98     & 7.6                 & 12          & 2.3       & 17       & 2.4       & 4.2  \\
1600 & 27100           & 5100         & 1.5      & 9.2                 & 24          & 2.1       & 34       & 2.4       & 4.2  \\
3200 & 27100           & 7000         & 1.5      & 9.2                 & 47          & 2         & 68       & 2.4       & 4.2  \\
6400 & 27100           & 9800         & 1.6      & 9.4                 & 93          & 2         & 137      & 2.4       & 4.1  \\
\bottomrule
  \end{tabular}
  \vskip -0.1in
\end{table}

\subsubsection{Fourth Example}
In the final example, we test the performance of thresholded Lasso in which the whole horizon of length $T$ is used for learning the support of $\ts$. We draw each entry of the design matrix $\mX \in \mathbb{R}^{T \times d}$ i.i.d.\ from $\mathcal{N}(0, \frac{1}{s})$ and set $\ts = (\frac{1}{\sqrt{s}}, \dots, \frac{1}{\sqrt{s}}, 0, \dots, 0)$ where $\ts$ has $s$ non-zero entries. In  Figure~\ref{fig:lasso}, we report the empirical probability of detection error $\Prob{S(\tthres) \not\supseteq S(\ts)}$ and the empirical mean $\mathbb{E}[|S(\tthres)|]$ over 10,000 independent trials. For $s = 2$, the empirical error probability is 0 for $T \geq 400$; for $s = 4$, the empirical error probability is 0 for $T \geq 800$. Figure~\ref{fig:lasso} shows that for $T \geq 100$ and $s \in \{2, 4\}$, thresholded Lasso is capable of correctly detecting the active variables in $\ts$ with high probability while also keeping the average number of false positives close to zero. As expected, the average number of false positives increases with~$s$. 

\begin{figure}[t]
\includegraphics[width=1\textwidth]{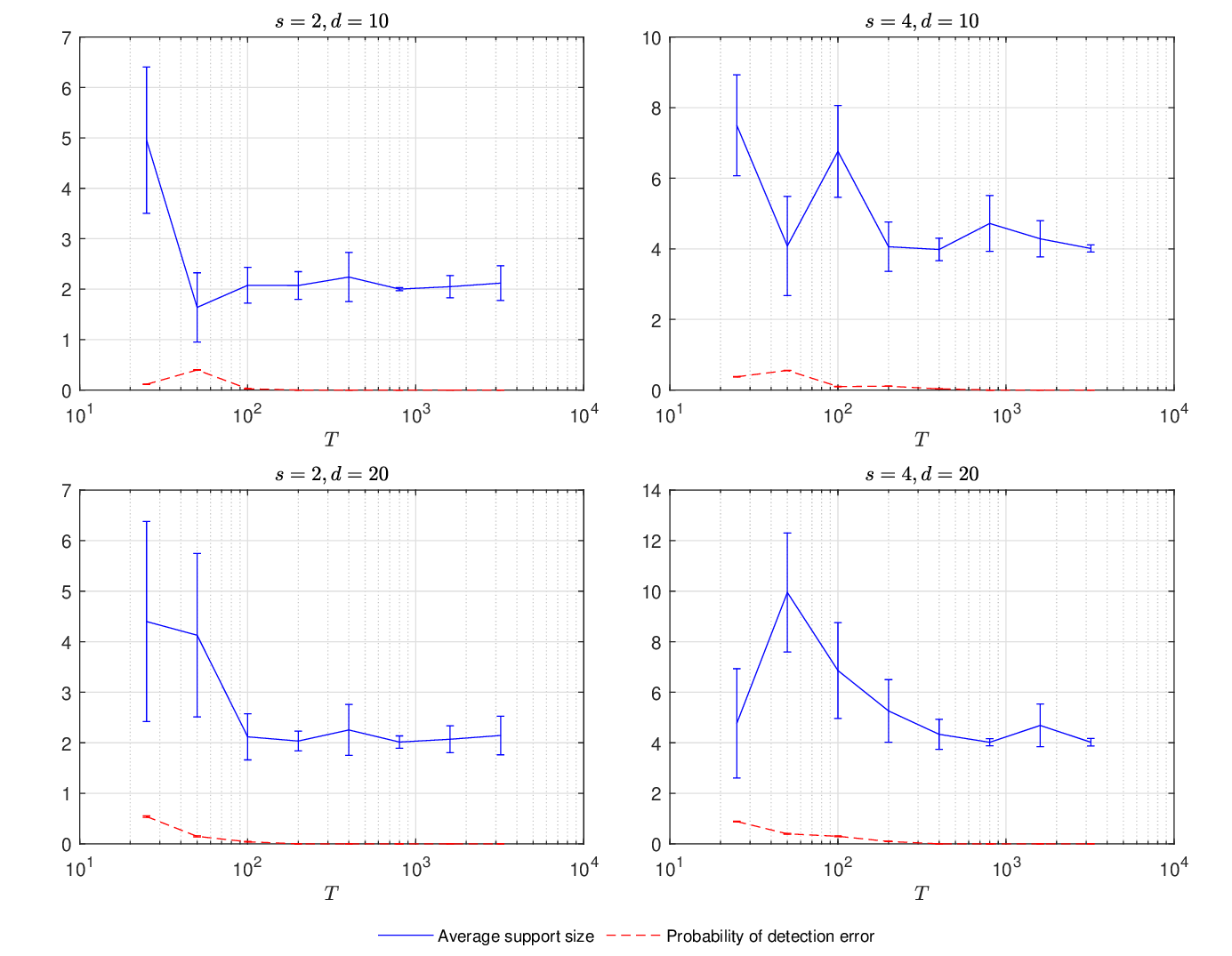}
\centering
\caption{The empirical detection error probability and the empirical size of the thresholded Lasso output.} 
\label{fig:lasso}
\end{figure}

\end{document}